\definecolor{salmon}{RGB}{234,153,153}
\definecolor{cornflowerblue}{RGB}{6,69,173}
\newcolumntype{Y}{>{\centering\arraybackslash}X}
\newcolumntype{Z}{>{\raggedleft\arraybackslash}X}
\newcommand{\acde}{\textsc{ACDE}\xspace}
\newcommand{\expect}[1]{\mathbb{E}  [ #1 ]}
\newcommand{\pa}[1]{\mathsf{Pa}(#1)}
\newcommand{\pr}[2]{\mathbb{P}_{#1} ( #2 )}
\DeclareMathOperator*{\argmin}{arg\,min}
\newcommand{\ffrac}[2]{\ensuremath{\frac{\displaystyle #1}{\displaystyle #2}}}
\theoremstyle{plain}
\newtheorem{theorem}{Theorem}[section]
\newtheorem{example}[theorem]{Example}
\theoremstyle{definition}
\newtheorem{definition}[theorem]{Definition}
\theoremstyle{remark}
\title{\textbf{DRCFS: Doubly Robust Causal Feature Selection}}
\date{}
\date{}
\author[1]{Francesco Quinzan\thanks{This work was initiated and partially done while the author was employed at KTH Royal Institute of Technology.}${}^{\dagger}$}
\author[2]{Ashkan Soleymani${}^{\dagger}$}
\author[2]{Patrik Jaillet}
\author[3]{\\Cristian R. Rojas}
\author[4,5]{Stefan Bauer}
\affil[1]{Department of Computer Science, University of Oxford}
\affil[2]{Laboratory for Information \& Decision Systems (LIDS), Massachusetts Institute of Technology}
\affil[3]{KTH Royal Institute of Technology}
\affil[4]{TU Munich}
\affil[5]{Helmholtz Munich}
\affil[ ]{}
\affil[ ]{\small \textit{${}^{\dagger}$ equal contribution}}
\begin{document}

\maketitle

\begin{abstract}
\noindent Knowing the features of a complex system that are highly relevant to a particular target variable is of fundamental interest in many areas of science. Existing approaches are often limited to linear settings, sometimes lack guarantees, and in most cases, do not scale to the problem at hand, in particular to images. We propose DRCFS, a doubly robust feature selection method for identifying the causal features even in nonlinear and high dimensional settings. We provide theoretical guarantees, illustrate necessary conditions for our assumptions, and perform extensive experiments across a wide range of simulated and semi-synthetic datasets. DRCFS significantly outperforms existing state-of-the-art methods, selecting robust features even in challenging highly non-linear and high-dimensional problems. 
\end{abstract}
\section{Introduction}

We study the fundamental problem of \emph{causal} feature selection for non-linear models. That is, consider a set of features $\boldsymbol{X} = \{X_1, \dots, X_m\}$, and an outcome $Y$ specified with an additive-noise model on some of the features \citep{DBLP:conf/nips/HoyerJMPS08,DBLP:conf/icml/ScholkopfJPSZM12,DBLP:journals/jmlr/PetersMJS14}:
\begin{enumerate}[leftmargin=20mm,label={Axiom (\Alph*)},itemsep=0pt,topsep=0pt]
   \item \label{cond:1} $Y = f(\pa{Y}) + \varepsilon$,
\end{enumerate}
for a subset $\pa{Y}\subseteq \boldsymbol{X}$ and posterior additive noise $\varepsilon$. Our goal is to identify the set of relevant features $\pa{Y}$ from observations. Feature selection is an important cornerstone of high-dimensional data analysis \citep{liu2007computational, bolon2015feature, li2017feature, butcher2020feature}, especially in data-rich settings. By including only relevant variables and removing nuisance factors, feature selection allows us to build models that are simple, interpretable, and more robust \citep{yu2020causality, janzing2020feature}. Moreover, in many applications in science and industry, we are not only interested in predictive features but we also aim to identify causal relationships between them \citep{pearl2009causality, spirtes2000causation,murphy2001active}. 

Knowing the causal structure allows one to understand the potential effects of interventions on a system of interest, spanning various fields such as economics~\citep{varian2016causal}, biology~\citep{hu2018application}, medicine~\citep{mehrjou2021genedisco, lv2021causal}, software engineering~\citep{siebert2022applications}, agriculture~\citep{tsoumas2022evaluating}, and climate research \citep{doi:10.1126/sciadv.aau4996}. In general, it is impractical to infer the underlying causal graph solely from observational data, and in some cases, it may even be infeasible \citep{pearl2009causality, spirtes2000causation, chickering2004large, maclaren2019can}. Often, the focus is on specific target variables, such as stock returns in trading, genes associated with a specific phenotype or disease, or reduction of $CO_2$ emissions in environmental studies, rather than all the interactions between the many variables affecting the underlying system.

Capturing the causal relationships between different features and a particular target variable is highly non-trivial in practice. Existing approaches for causal feature selection often make unrealistic assumptions on the data generating process (DGP), or they simply do not scale to the problem at hand in terms of computational efficiency~\citep{yu2020causality} and/or statistical efficiency~\citep{yu2021unified}. Moreover, many of these approaches have only limited theoretical guarantees, and they have only been evaluated on noiseless data, far from the needs of practitioners in industry and science  \cite{kira1992practical,guyon2008practical}. 

To overcome these limitations, we propose a novel, doubly robust causal feature selection method (DRCFS) that significantly outperforms existing state-of-the-art approaches, especially in non-linear, noisy, and data-sparse settings. Our contributions are as follows:
\begin{enumerate}[label={$\bullet$},itemsep=0pt,topsep=0pt]
\item We propose DRCFS, a novel method for doubly robust feature selection, even in non-linear and high dimensional settings. In particular, our approach has guarantees for realistic cases, when there are cycles or hidden confounders between the features (see Figure \ref{fig:graph}). 
\item Based on our model description, we provide theoretical guarantees that substantiate our framework. In particular, we illustrate the necessity of our assumptions for causal feature selection and demonstrate that our approach is doubly robust while achieving $\sqrt{n}$- consistency.  
\item We provide a comprehensive experimental evaluation across various synthetic and semi-synthetic datasets, demonstrating that DRCFS significantly outperforms an extensive list of state-of-the-art baselines for feature selection.  
\end{enumerate}
\paragraph{Additional related work.}
Learning causal features has been investigated early on, and \citet{guyon2007causal} provides a broad review of these works. Since then, multiple other works have proposed different ideas on how to learn causal features from data ~\citep{cawley2008causal,paul2017feature,yu2021unified}. However, none of these works recover all the direct causal parents asymptotically or non-asymptotically, and they do not provide guarantees, especially for the cyclic or confounded setting.

Under the expense of additional assumptions such as faithfulness, there has been an orthogonal line of research aiming to infer the Markov equivalence class through iterative testing of d-separation statements using conditional independence test as done in the $\mathrm{PC}$-algorithm \citep{MasSchJan19,pearl2009causality,spirtes2000causation}.
    For a recent overview of causal feature selection approaches and their evaluation, we refer to \citet{yu2020causality}. Another approach of interest is to consider strong assumption on the causal structure of the DGP to achieve identifiability \citep{DBLP:conf/nips/HoyerJMPS08,DBLP:journals/jmlr/PetersMJS14}. This approach includes the works by \citet{DBLP:journals/jmlr/ShimizuHHK06} and \citet{DBLP:conf/uai/PetersMJS11}.
\section{Model Ddescription}
\label{sec:preliminarie}
\subsection{Causal structure} 
We study causal feature selection for a model as in \ref{cond:1}, using the language of probabilistic causality and structural causal models (SCMs, see \citet{pearlj, 10.1214/21-AOS2064}). Following \citet{10.1214/21-AOS2064}, we define the causal structure of a model in terms of direct causal effects among the variables (see Appendix \ref{appendix:scm} for the precise concepts introduced in this section). Direct causal effects are defined by distribution changes due to \emph{interventions} on the DGP. An intervention amounts to actively manipulating the generative process of a random variable $X_j$, without altering the remaining components of the DGP. For instance, in a randomized experiments an intervention could consist of giving patients a medical treatment. We specifically consider perfect interventions $X_j \gets x_j$, by which the post-interventional variable is set to a constant $X_j\equiv x_j$. We denote with $Y \mid do(x_j)$ the post-interventional outcome $Y$. Using this notation, a variable $X_j$ has a direct causal effect on the outcome $Y$ if intervening on $X_j$ can affect the outcome while keeping fixed the other variables $\boldsymbol{X}_j^c \coloneqq \{X_1, \dots, X_{j-1}, X_{j+1}, \dots, X_m\}$, i.e., there exists $x_j' \neq x_j$ such that
\begin{equation}
\label{eq:direct_effect}   
\pr{}{Y\mid do(x'_j,\boldsymbol{x}_j^c)} \neq \pr{}{Y\mid do(x_j,\boldsymbol{x}_j^c)}
\end{equation}
for some array $\boldsymbol{x}_j^c$ in the range of $\boldsymbol{X}_j^c$. Following \citet{10.1214/21-AOS2064}, we define the causal structure $\mathcal{G}$ of the DGP as a directed graph whose edges represent all direct causal effects among the variables.

We assume unique solvability of the SCM (see Definition 3.3 by \citet{10.1214/21-AOS2064} and Appendix \ref{appendix:uniquesolv}). By this assumption, the distribution on the observed variables $\boldsymbol{V} = \{X_1, \dots, X_m, Y\} $ is specified by a mixing measurable function of the form $\boldsymbol{V} = \boldsymbol g(\boldsymbol{U})$. The function $\boldsymbol g$ is uniquely defined by the structural equations of the model (see Appendix \ref{appendix:uniquesolv}). Here, $\boldsymbol{U}$ is a random vector of latent sources. According to our model, there may be potential hidden confounders for the observed variables. Furthermore, the underlying graph $\mathcal{G}$ may contain cycles. An example of a causal graph for a uniquely solvable SCM is presented in Figure \ref{fig:graph}. 
\begin{figure}[!t]
  \begin{minipage}[c]{0.4\textwidth}
  \centering
    \begin{tikzpicture}[node distance=10mm and 8mm, main/.style = {draw, circle, minimum size=0.6cm, inner sep=1pt}, >={triangle 45}]  
    \node[main, dotted] (1) {$\scriptstyle \varepsilon$};
    \node[main] (2) [right =of 1] {$\scriptstyle X_1$}; 
    \node[main] (3) [right =of 2] {$\scriptstyle X_3$}; 
    \node[main, dotted] (4) [above =of $(2)!0.5!(3)$] {$\scriptstyle U$}; 
    \node[main, thick] (5) [below =of $(2)!0.5!(3)$] {$\scriptstyle Y$};
    \node[main] (6) [right =of 3] {$\scriptstyle X_2$}; 
    \draw[->] (2) -- (5); 
    \draw[->] (3) -- (5); 
    \draw[->] (3) to [bend right] (6); 
    \draw[->] (6) to [bend right] (3); 
    \draw[->, dotted] (4) -- (2);
    \draw[->, dotted] (4) -- (3);
    \draw[->, dotted] (1) -- (5);
  \end{tikzpicture}
  \end{minipage}\hfill
  \begin{minipage}[c]{0.6\textwidth}
    \caption{
     Causal structure of a DGP as in \ref{cond:1}-\ref{cond:3}. The outcome is specified as $Y = f(X_1, X_3) + \varepsilon$, with $f$ a deterministic non-linear function, and $\varepsilon$ exogenous independent noise. $U$ is a latent variable that acts as a confounder for $X_1$ and $X_3$.
    } \label{fig:graph}
  \end{minipage}
\end{figure}
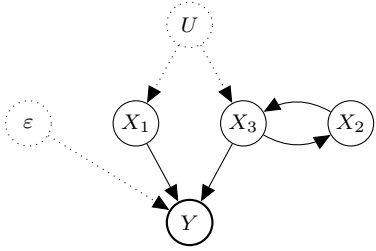
\subsection{Notation}
We always denote with $Y$ the outcome, with $X_j$ the features, for $j = 1, \dots, m$, and with $n$, the number of observations. We use the bold capital script, e.g. $\boldsymbol{X}, \boldsymbol{Z}$, to denote a group of random variables. The set $\boldsymbol{X}$ always refers to the set of all the features. For a given index $j = 1, \dots, m$, we denote with $\mathbf{X}^c_j$ the set consisting of all the features \emph{except} $X_j$. We use the standard $Y\mid do(\cdot)$ notation, to denote the post-interventional outcome. The notation $\pa{Y}$ denotes the subset of features that yield a direct causal effect on the outcome. We use the letter $\mathcal{D}$ to denote a generic dataset, and we denote with $\hat{\mathbb{E}}_{\mathcal{D}}[\ \cdot \ ]$ the empirical expected value over this dataset.
\subsection{Necessary Conditions for Causal Feature Selection}
We assume that the outcome is specified by the formula in \ref{cond:1}. Although very general, this model does not guarantee that we may identify $\pa{Y}$ from samples. In fact, there may be multiple models as in  \ref{cond:1} with different causal parents, that induce the same joint probability distribution on the observed variables. To overcome this problem, we consider the following additional assumptions on the data-generating process:
\begin{enumerate}[leftmargin=20mm,label={Axiom (\Alph*)},itemsep=0pt,topsep=0pt]
\setcounter{enumi}{1}
   \item \label{cond:2} $\varepsilon$ is exogenous noise, independent of $\boldsymbol{X}$.
   \item \label{cond:3} $Y$ has no direct causal effect on any of the features.
\end{enumerate}
\ref{cond:2} is a standard assumption in the literature of continuous additive noise models \citep{DBLP:journals/jmlr/PetersMJS14}. \ref{cond:3} requires that the outcome $Y$ does not causally affect any of the features $X_j$. This requirement is aligned with previous related work \citep{soleymani2022causal}. As we show in Section \ref{sec:causal_feature_selection}, under Axioms \ref{cond:1}-\ref{cond:3} we can identify the causal parents of the outcome from samples. However, if either \ref{cond:2} or \ref{cond:3} fail, then performing Causal Feature Selection from observations is impossible. We show this, by considering two counterexamples, given in Example \ref{example1} and Example \ref{example2} respectively.
\begin{example}[Necessity of \ref{cond:2}]
\label{example1}
Consider two datasets $\{X_i, Y_i\}$ for $i = 1,2$. The respective DGPs are defined as follows:
\begin{equation*}
\left \{
\begin{array}{l}
X_1 = N_{1} \\
\ Y_1 = \varepsilon_1
\end{array}
\right .
\quad
\text{and} \quad \left \{
\begin{array}{l}
X_2 = N_{2} \\
\ Y_2 = X_2 + \varepsilon_2
\end{array}
\right .
\end{equation*}
with $(N_1, \varepsilon_1) \sim \mathcal{N}(\boldsymbol 0, \Sigma)$, a zero-mean joint Gaussian distribution with covariance matrix
\begin{equation*}
\Sigma = \left [
\begin{array}{cc}
1 & 1 \\
1 & 1
\end{array}
\right ],
\end{equation*}
while $N_2 \sim \mathcal{N}(0, 1)$ and $\varepsilon_2 \equiv 0$. Note that $\{X_1, Y_1\}$ satisfies~\ref{cond:3} but violates~\ref{cond:2}, since the noise $\varepsilon_1$ is correlated with $X_1$. Both datasets entail the same joint probability distribution. However, $Y_1$ is exogenous to the model, whereas $X_2$ yields a direct causal effect on $Y_2$.
\end{example}
Example \ref{example1} shows that if \ref{cond:2} is violated, then it is impossible to perform Causal Feature Selection from observational data. We provide a second example, to show that \ref{cond:3} is also necessary.
\begin{example}[Necessity of \ref{cond:3}]
\label{example2}
Consider two datasets $\{X_i, Y_i\}$ for $i = 1,2$, with DGPs defined as:
\begin{equation*}
\left \{
\begin{array}{l}
X_1 = N_1 \\
\ Y_1 = X_1 + \varepsilon_1
\end{array}
\right .
\quad
\text{and} \quad \left \{
\begin{array}{l}
X_2 = 0.5 \cdot Y_2 + N_2\\
\ Y_2 = \varepsilon_2
\end{array}
\right .
\end{equation*}
where $\varepsilon_1, N_1, \sim \mathcal{N}(0, 1)$, $\varepsilon_2 \sim \mathcal{N}(0,2)$ and $N_2 \sim \mathcal{N}(0,0.5)$ are independent. In this case, dataset $\{X_2, Y_2\}$ satisfies~\ref{cond:2} but violates~\ref{cond:3}, since $Y_2$ causes $X_2$. Then, both datasets are jointly normal with zero mean and the same covariance matrix. However, $Y_1$ has as causal parent $X_1$, and $Y_2$ has no causal parents.
\end{example}
\section{Methodology}
\subsection{Double Machine Learning (DoubleML)}
\label{sec:riesz}\label{sec:mixed_bias_property}
We provide a statistical test for Causal Feature Selection based on DoubleML estimators. DoubleML is a general framework for parameter estimation, which uses debiased score functions and (double) cross-fitting to achieve $\sqrt{n}$-consistency guarantees. Following \citet{10.1093/biomet/asaa054,DBLP:conf/icml/ChernozhukovNQS22}, we provide a suitable debiased score function using the Riesz Representation Theorem and the Mixed Bias Property.

\textbf{The Riesz Representation Theorem.}%
Let $\boldsymbol{V}$ be the collection of observed random variables of an SCM as in Definition \ref{def:SCM}. Let $\boldsymbol{X}$ be a random variable in $\boldsymbol{V}$, $g$ any real-valued function of $\boldsymbol{X}$ such that $\expect{g^2(\boldsymbol{X})} < \infty$, and consider a linear functional $m(\boldsymbol{V}; g)$ in $g$. The Riesz Representation Theorem ensures that, under certain conditions,
there exists a function $\alpha_0$ of $\boldsymbol{X}$ such that $\expect{m(\boldsymbol{V}; g)} = \expect{\alpha_0(\boldsymbol{X}) g(\boldsymbol{X})}$. The function $\alpha_0$ is called the \emph{Riesz Representer} (RR). Crucially, the RR only depends on the functional $m$, and not on the function $g$. \citet{automatic_debiased} shows that the Riesz representer can be estimated by solving the following optimization problem:
\begin{equation}
\label{eq:riesz_estimation}
\alpha_0 = \argmin_\alpha  \expect{\alpha(\boldsymbol{X})^2 - 2m(\boldsymbol{V}; \alpha)}.
\end{equation}
Using the RR, we will derive a debiased score function for the parameter $\theta_0 \coloneqq \expect{m(\boldsymbol{V}; g_0)}$ with $g_0(\boldsymbol{X}) = \expect{Y\mid \boldsymbol{X}}$. This score function is \say{debiased} in the sense that it fulfills the Mixed Bias Property.

\textbf{The Mixed Bias Property.} The RR is crucial when learning an estimate $\hat{\theta}_0$ of the parameter $\theta_0 \coloneqq \expect{m(\boldsymbol{V}; g_0)}$ as described above. In fact, as shown by \citet{DBLP:conf/icml/ChernozhukovNQS22,10.1093/biomet/asaa054}, one can achieve $\sqrt{n}$-consistency for $\hat{\theta}_0$ by combining (double) cross-fitting with a debiased learning objective of the form
\begin{equation}
\label{eq:orthogonal_score}
\varphi( \theta, \boldsymbol \eta) \coloneqq m(\boldsymbol{V}; g ) + \alpha(\boldsymbol{X})\cdot(Y - g(\boldsymbol{X})) - \theta.
\end{equation}
Here, $\boldsymbol \eta \coloneqq (\alpha, g)$ is a nuisance parameter consisting of a pair of square-integrable functions. Note that it holds $\expect{\varphi( \theta_0, \boldsymbol \eta_0)} = 0$, with $\boldsymbol \eta_0 \coloneqq ( \alpha_0, g_0 )$ consisting of the RR $\alpha_0$ of the moment functional $m(\boldsymbol{V}; g)$ and the conditional expected value $g_0(\boldsymbol{X}) = \expect{Y\mid \boldsymbol{X}}$. As shown by \citet{DBLP:conf/icml/ChernozhukovNQS22}, the score function Eq. \eqref{eq:orthogonal_score} yields
\begin{equation*}
\expect{\varphi(\theta_0, \boldsymbol \eta)}
= - \expect{(\alpha(\boldsymbol{X})- \alpha_0(\boldsymbol{X}))(g(\boldsymbol{X})- g_0(\boldsymbol{X}))}.
\end{equation*}
This equation corresponds to the Mixed Bias Property as in Definition 1 of \citet{10.1093/biomet/asaa054}. Hence, when cross-fitting or double-cross is employed, the resulting estimator $\hat{\theta}$ has the \emph{double robustness property}. That is, the quantity $\sqrt{n}(\hat{\theta} - \theta_0)$ converges in distribution to a zero-mean Normal distribution whenever the product of the mean-square convergence rates of $\alpha$ and $g$ is faster than $\sqrt{n}$. By these guarantees, if an estimator of $g_0$ has a good convergence rate, then the rate requirement on the estimator of the RR is less strict, and vice versa.
\subsection{The Average Controlled Direct Effect (\acde{})}
\label{sec:causal_feature_selection}
Our approach to causal feature selection uses the \acde. The \acde is a concept introduced by \citet{DBLP:conf/uai/Pearl01} to provide an operational description of the statement \say{the variable $X_j$ directly influences the outcome $Y$}. By this conceptualization, we ask whether the expected outcome would change under an intervention $X_j \gets x_j$, while
holding the remaining observed variables at a predetermined value. Following the notation given in Section \ref{sec:preliminarie}, the \acde is defined as
\begin{equation}
\label{eq:acde}
    \acde(x_j, x'_j| \boldsymbol{x}_j^c) \coloneqq \expect{Y\mid do(x_j,\boldsymbol{x}_j^c)} - \expect{Y\mid do(x'_j,\boldsymbol{x}_j^c)}.
\end{equation}
This quantity captures the difference between the distribution of the outcome $Y$ under $X_j \gets x_j$ and $X_j \gets x'_j$, when we keep all other variables $\boldsymbol{X}_j^c$ fixed at some values $\boldsymbol{x}_j^c$. As we will show later, under Axioms \ref{cond:1}-\ref{cond:3} a variable $X_j$ is a causal parent of $Y$ if and only if the \acde is non-zero for at least a triple $(x_j, x'_j, \boldsymbol{x}_j^c)$ of possible interventional values. Hence, we can perform causal feature selection by testing whether the \acde is identically zero or not.
\begin{algorithm*}[t]
	\caption{Doubly Robust Causal Feature Selection Algorithm (DRCFS)}
	\label{alg}
	\begin{algorithmic}[1] 
	\STATE split the $n$ samples $\mathcal{D} = \{(x_{1i}, \dots, x_{mi}, y_i)\}_{i = 1, \dots, n}$ into $k$ disjoint sets $\mathcal{D}_1,\dots, \mathcal{D}_k$;
    \STATE define $\mathcal{D}_{l}^c \gets \mathcal{D} \setminus \mathcal{D}_{l}$ for all $l \in [k]$;\vspace{8pt}\\
    \FOR{each $l \in [k]$}
	\STATE construct an estimator $\hat{g}_{ l} (\boldsymbol{X})$ for $\expect{Y\mid \boldsymbol{X}}$ and an estimator $\hat{\alpha}_{l}(\boldsymbol{X})$ for the RR of $m_0$,  using samples in $\mathcal{D}_{l}^c$;
    \STATE $\hat{\theta}_{0,l} \gets \hat{\mathbb{E}}_{ \mathcal{D}_{l}}  [Y \cdot \hat{g}_{l}(\boldsymbol{X}) - Y \cdot \hat{\alpha}_{l}(\boldsymbol{X}) - \hat{\alpha}_{l }(\boldsymbol{X}) \cdot \hat{g}_{l }(\boldsymbol{X}) ] $;
    \STATE $\hat{\sigma}_{0,l}^2 \gets \hat{\mathbb{E}}_{ \mathcal{D}_{l}}  [ (Y \cdot \hat{g}_{l}(\boldsymbol{X}) - Y \cdot \hat{\alpha}_{l}(\boldsymbol{X}) - \hat{\alpha}_{l }(\boldsymbol{X}) \cdot \hat{g}_{l }(\boldsymbol{X}) - \hat{\theta}_{0,l} )^2 ] $;
    \ENDFOR
    \STATE $\hat{\theta}_{0} \gets \frac{1}{k} \sum_{l }\hat{\theta}_{0,l}$ and $\hat{\sigma}_{0}^2 \gets \frac{1}{k} \sum_{l}\hat{\sigma}_{0,l}^2$;\vspace{8pt}\\
	\FOR{each feature $X_j \in \boldsymbol{X}$}
    \FOR{each $l \in [k]$}
    \STATE construct an estimator $\hat{h}_{l}^j(\boldsymbol{X}_j^c) $ for $\expect{Y\mid \boldsymbol{X}_j^c}$ and an estimator $\hat{\alpha}_{l}^j(\boldsymbol{X}_j^c)$ for the RR of $m_0^j$, using samples in $\mathcal{D}_{l}^c$;
    \STATE $\hat{\theta}_{j,l} \gets \hat{\mathbb{E}}_{ \mathcal{D}_{l}}  [Y \cdot \hat{h}_{l}^j(\boldsymbol{X}_j^c) - Y \cdot \hat{\alpha}_{l}^j(\boldsymbol{X}_j^c) - \hat{\alpha}_{l }^j(\boldsymbol{X}_j^c) \cdot \hat{h}_{l }^j(\boldsymbol{X}_j^c)] $;
    \STATE $\hat{\sigma}_{j,l}^2 \gets \hat{\mathbb{E}}_{ \mathcal{D}_{l}}  [ (Y \cdot \hat{h}_{l}^j(\boldsymbol{X}_j^c) - y \cdot \hat{\alpha}_{l}^j(\boldsymbol{X}_j^c) - \hat{\alpha}_{l }^j(\boldsymbol{X}_j^c) \cdot \hat{h}_{l }^j(\boldsymbol{X}_j^c) - \hat{\theta}_{j,l}^j )^2]$;
    \ENDFOR
    \STATE $\hat{\theta}_{j} \gets \frac{1}{k} \sum_{l }\hat{\theta}_{j,l}$ and $\hat{\sigma}_{j}^2 \gets \frac{1}{k} \sum_{l }\hat{\sigma}_{j,l}^2$;\vspace{8pt}\\
    \STATE perform a paired $t$-test to determine if $\hat{\theta}_{j} \approx \hat{\theta}_{0}$ and select feature $X_j$ if the null-hypotheses is rejected.
    \ENDFOR
   	\STATE \textbf{return} the selected features $X_j$;
   	\end{algorithmic}
\end{algorithm*}
\paragraph{Using the \acde{} for Causal Feature Selection.} Our approach to causal feature selection essentially consists of testing whether a feature $X_j$ yields a non-zero average controlled direct effect on the outcome. This approach is justified by the following result. 
\begin{restatable}{lemma}{lemmaparents}
\label{lemma:parents}
Consider a causal model as in Axioms \ref{cond:1}-\ref{cond:3}, and fix a feature $X_j$. Then, $\expect{Y\mid do(x_j,\boldsymbol{x}_j^c)} - \expect{Y\mid do(x'_j,\boldsymbol{x}_j^c)} \neq 0$ for some interventional values $x_j, x'_j $ and $\boldsymbol{x}_j^c$ if and only if $X_j \in \mathsf{Pa}(Y)$.
\end{restatable}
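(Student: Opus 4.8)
The plan is to reduce the \acde{} of $X_j$ to a difference of two values of the structural function $f$ in \ref{cond:1}. The ``only if'' direction is then essentially definitional: by the definition of $\pa{Y}$ in Section~\ref{sec:preliminarie}, if $X_j\notin\pa{Y}$ then $X_j$ exerts no direct causal effect on $Y$, so $\pr{}{Y\mid do(x_j,\boldsymbol{x}_j^c)}=\pr{}{Y\mid do(x'_j,\boldsymbol{x}_j^c)}$ for every triple $(x_j,x'_j,\boldsymbol{x}_j^c)$, whence $\acde(x_j,x'_j\mid\boldsymbol{x}_j^c)=0$ for every triple. The substantive content is the ``if'' direction: one is only handed the information that \emph{some} interventional \emph{law} of $Y$ changes, and this must be upgraded to a change in the \emph{mean}, which is exactly where the additive-noise structure \ref{cond:1} together with the exogeneity \ref{cond:2} is used.

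Granting for the moment the structural identity that for every full interventional assignment $\boldsymbol{X}\gets\boldsymbol{x}$ of the features one has $Y\mid do(\boldsymbol{x})\overset{d}{=}f(\boldsymbol{x}_{\pa{Y}})+\varepsilon$, where $\boldsymbol{x}_{\pa{Y}}$ is the restriction of $\boldsymbol{x}$ to the coordinates in $\pa{Y}\subseteq\boldsymbol{X}$ and $\varepsilon$ carries its original integrable marginal, the ``if'' direction follows quickly. Suppose $X_j\in\pa{Y}$. By the definition of the causal structure there is a triple with $\pr{}{Y\mid do(x_j,\boldsymbol{x}_j^c)}\neq\pr{}{Y\mid do(x'_j,\boldsymbol{x}_j^c)}$; by the identity these are the laws of $c+\varepsilon$ and $c'+\varepsilon$ with $c\coloneqq f((x_j,\boldsymbol{x}_j^c)_{\pa{Y}})$ and $c'\coloneqq f((x'_j,\boldsymbol{x}_j^c)_{\pa{Y}})$. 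Were $c=c'$ these would be the \emph{same} random variable, hence the same law, a contradiction; so $c\neq c'$, and therefore
\[
\acde(x_j,x'_j\mid\boldsymbol{x}_j^c)=\expect{Y\mid do(x_j,\boldsymbol{x}_j^c)}-\expect{Y\mid do(x'_j,\boldsymbol{x}_j^c)}=c-c'\neq 0.
\]

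It remains to prove the structural identity, and this is the step I expect to be the main obstacle. The argument I would give is: under unique solvability of the SCM (Appendix~\ref{appendix:uniquesolv}), a perfect intervention on all the features replaces their structural assignments by the constant vector $\boldsymbol{x}$ without altering the joint law of the latent sources, in particular the marginal of $\varepsilon$; since \ref{cond:3} forbids edges out of $Y$, the outcome sits strictly downstream of the features, so no feedback loop runs through $Y$ and the assignment \ref{cond:1} for $Y$ survives the intervention verbatim, now with its $\pa{Y}$-arguments frozen at $\boldsymbol{x}_{\pa{Y}}$, giving $Y\mid do(\boldsymbol{x})\overset{d}{=}f(\boldsymbol{x}_{\pa{Y}})+\varepsilon$; finally \ref{cond:2} keeps $\varepsilon$ exogenous (so its marginal is unchanged by the intervention) and the standing moment assumptions make $\expect{\varepsilon}$ finite, so taking expectations yields $\expect{Y\mid do(\boldsymbol{x})}=f(\boldsymbol{x}_{\pa{Y}})+\expect{\varepsilon}$. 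The delicate point here is making the passage from ``uniquely solvable, possibly cyclic and latently confounded SCM'' to this clean translate-of-$\varepsilon$ description fully rigorous: one must invoke the $\boldsymbol{V}=\boldsymbol{g}(\boldsymbol{U})$ representation and the precise intervention semantics for such SCMs, and make explicit that \ref{cond:3} is exactly what removes $Y$ from every cycle, so that intervening on the features pins down all of $Y$'s determinants except $\varepsilon$. Once the identity is in hand, both implications are one line each, as above.
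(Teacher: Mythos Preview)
Your proposal is correct and follows essentially the same route as the paper: both reduce $Y\mid do(x_j,\boldsymbol{x}_j^c)$ to $f(\boldsymbol{z})+\varepsilon$ with $\boldsymbol{z}$ the restriction to $\pa{Y}$, then observe that equality of these post-interventional laws is equivalent to equality of the constants $f(\boldsymbol{z})$, which in turn is equivalent to vanishing \acde{}. The only difference is cosmetic: you argue the ``if'' direction directly while the paper proves its contrapositive, and you spell out the justification of the structural identity more carefully than the paper, which simply asserts $Y\mid do(x_j,\boldsymbol{x}_j^c)=f(\boldsymbol{z})+\varepsilon$.
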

The proof is deferred to Appendix \ref{lemmaparents}. Intuitively, this proof tells us that direct causal effects can be checked with the expected value of the post-interventional outcome, instead of the full distribution as in Eq. \eqref{eq:direct_effect}. This lemma crucially relies on Axioms \ref{cond:1}-\ref{cond:3} and it does not hold for general causal models.

\textbf{Estimating the \acde{} from samples.} A major challenge in using the \acde for Causal Feature Selection is that we do not assume knowledge of the post-interventional outcome distribution. However, it turns out that we can perform this test from observational data, by considering this quantity:
\begin{equation}
\label{eq:chi}
    \chi_j \coloneqq \mathbb{E}_{(x_j, \boldsymbol{x}_j^c) \sim (X_j, \boldsymbol{X}_j^c)}\left [\left ( \expect{Y | x_j, \boldsymbol{x}_j^c} - \expect{Y | \boldsymbol{x}_j^c} \right )^2\right].
\end{equation}
Under mild assumptions, testing whether $\chi_j = 0$ is equivalent to testing if there is no average direct effect.
\begin{restatable}{lemma}{lemmachi}
\label{lemma:chi}
Consider a causal model as in Axioms \ref{cond:1}-\ref{cond:3}. Then, $\expect{Y\mid do(x_j,\boldsymbol{x}_j^c)} - \expect{Y\mid do(x'_j,\boldsymbol{x}_j^c)} \neq 0$ almost surely if and only if $\chi_j \neq 0$. It follows that $\chi_j \neq 0$ if and only if $X_j \in \pa{Y}$.
\end{restatable}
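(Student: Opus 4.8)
The plan is to obtain this lemma as a direct consequence of Lemma \ref{lemma:parents}, the identification of the interventional contrast by an observational one, and the elementary fact that a nonnegative integrand integrates to zero precisely when it vanishes almost everywhere.

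First I would establish the identification step: under Axioms \ref{cond:1}--\ref{cond:3}, for almost every $(x_j,\boldsymbol{x}_j^c)$ in the support of $(X_j,\boldsymbol{X}_j^c)$,
\[
\expect{Y\mid do(x_j,\boldsymbol{x}_j^c)} \;=\; \expect{Y\mid x_j,\boldsymbol{x}_j^c}.
\]
The reasoning: by \ref{cond:1} we have $Y=f(\pa{Y})+\varepsilon$; by \ref{cond:3}, $Y$ is not an ancestor of any feature, so the perfect intervention $do(x_j,\boldsymbol{x}_j^c)$ only clamps variables that never feed back into $Y$'s mechanism, making the post-interventional outcome $f(\boldsymbol{x}_{\pa{Y}})+\varepsilon$ with the same exogenous $\varepsilon$, whose law is unaffected by \ref{cond:2}; hence the left side equals $f(\boldsymbol{x}_{\pa{Y}})+\expect{\varepsilon}$. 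On the right side, conditioning on all the features renders $f(\pa{Y})=f(\boldsymbol{x}_{\pa{Y}})$ deterministic, while $\varepsilon\indep\boldsymbol{X}$ from \ref{cond:2} gives $\expect{\varepsilon\mid x_j,\boldsymbol{x}_j^c}=\expect{\varepsilon}$; so the two sides coincide. This is the identity already underpinning the proof of Lemma \ref{lemma:parents}, so I would simply cite it. It yields $\acde(x_j,x_j'\mid\boldsymbol{x}_j^c)=\expect{Y\mid x_j,\boldsymbol{x}_j^c}-\expect{Y\mid x_j',\boldsymbol{x}_j^c}$, and, setting $D(x_j,\boldsymbol{x}_j^c):=\expect{Y\mid x_j,\boldsymbol{x}_j^c}-\expect{Y\mid \boldsymbol{x}_j^c}$ so that $\chi_j=\expect{D(X_j,\boldsymbol{X}_j^c)^2}$, the tower property identifies $D(\cdot,\boldsymbol{x}_j^c)$ with the deviation of $x_j\mapsto\expect{Y\mid x_j,\boldsymbol{x}_j^c}$ from its conditional mean $\expect{Y\mid\boldsymbol{x}_j^c}$.

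Then I would run the zero/nonzero chain. Since $\chi_j=\expect{D(X_j,\boldsymbol{X}_j^c)^2}$ with $D^2\ge 0$, we get $\chi_j=0$ iff $D=0$ for $(X_j,\boldsymbol{X}_j^c)$-a.e.\ pair; by the previous paragraph this holds iff, for a.e.\ $\boldsymbol{x}_j^c$, the map $x_j\mapsto\expect{Y\mid x_j,\boldsymbol{x}_j^c}$ is a.s.\ constant, equivalently iff $\acde(x_j,x_j'\mid\boldsymbol{x}_j^c)=0$ for a.e.\ triple $(x_j,x_j',\boldsymbol{x}_j^c)$. Invoking Lemma \ref{lemma:parents} --- the $\acde$ is nonzero for some triple iff $X_j\in\pa{Y}$ --- then closes the loop: $\chi_j\neq0$ iff the $\acde$ does not vanish iff $X_j\in\pa{Y}$, which is the stated claim (reading ``$\acde\neq 0$ almost surely'' as ``on a set of triples of positive measure'').

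The routine algebra is all in the first two steps; the delicate part is reconciling the quantifiers, ``nonzero for some triple'' in Lemma \ref{lemma:parents} versus ``nonzero on a positive-measure set of triples'' implicit in $\chi_j\neq 0$. Conditional expectations are defined only up to null sets, and in degenerate configurations not excluded by \ref{cond:1}--\ref{cond:3} alone --- for instance $X_j$ a deterministic function of $\boldsymbol{X}_j^c$, in which case conditioning on $\boldsymbol{x}_j^c$ pins $X_j$ down and forces $\chi_j=0$ even for a true parent --- the implication from ``some triple'' to ``positive measure'' fails. This is where the ``mild assumptions'' alluded to before the lemma enter: a positivity/overlap-type condition ensuring the conditional law of $X_j$ given $\boldsymbol{X}_j^c$ genuinely varies, together with enough regularity (continuity of $f$ and of the relevant conditional densities) for a single nonzero $\acde$ value to spread to a positive-measure set. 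I would state these explicitly and check that the pointwise-to-a.e.\ passage goes through under them; integrability of all the conditional means involved is already covered by the square-integrability hypotheses in force.
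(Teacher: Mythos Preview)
Your proposal is correct and follows essentially the same route as the paper: establish the identification $\expect{Y\mid do(x_j,\boldsymbol{x}_j^c)}=\expect{Y\mid x_j,\boldsymbol{x}_j^c}$, then use nonnegativity and the tower property to tie $\chi_j=0$ to the a.s.\ vanishing of the $\acde$, and close with Lemma~\ref{lemma:parents}. The only noteworthy differences are that the paper derives the identification via Rule~2 of the do-calculus rather than your direct structural-equation computation, and that it does not isolate the ``some triple'' versus ``positive-measure set'' discrepancy you flag---your treatment of that point is more careful than the original.
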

The proof is deferred to Appendix \ref{lemmachi}. From this lemma, we can use $\chi_j $ to estimate if a feature $X_j$ is a causal parent of the outcome $Y$, i.e., perform Causal Feature Selection, by testing if $\chi_j = 0$. Crucially, $\chi_j$ can be estimated efficiently from samples.
\subsection{Estimating the \acde{} with DoubleML}
\label{sec:debiased}
We provide a debiased score function for $\chi_j$, using the Riesz representation theorem, as described in Section \ref{sec:riesz}. Crucially, we show that $\chi_j$ is the difference of the expected value of two linear moment functionals. The following lemma holds.
\begin{restatable}{lemma}{condvariance}
\label{lemma:cond_variance}
Let $\chi_j$ be as in Eq. \eqref{eq:chi}, and $\boldsymbol{X} = \{X_j\} \cup \boldsymbol{X}_j^c$. For any square-integrable random variable $g(\boldsymbol{X})$, consider the moment functional $m_0(\boldsymbol{V}; g) \coloneqq Y  g(\boldsymbol{X})$. Similarly, for any square-integrable random variable $h(\boldsymbol{X}_j^c)$, consider the moment functional $m_j(\boldsymbol{V}; h) \coloneqq Y  h(\boldsymbol{X}_j^c)$.\footnote{Note that $m_0(\boldsymbol{V}; g)$ and $m_j(\boldsymbol{V}; h)$ are distinct functionals, since they are defined over sets of functions with different domains.} Then, it holds that 
\begin{equation*}
    \chi_j = \expect{m_0(\boldsymbol{V}; g_0)} - \expect{m_j(\boldsymbol{V}; h_0)},
\end{equation*}
with $g_0(\boldsymbol{X}) = \expect{Y \mid \boldsymbol{X}}$ and $h_0(\boldsymbol{X}_j^c) = \expect{Y \mid \boldsymbol{X}_j^c}$.
\end{restatable}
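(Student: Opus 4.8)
The plan is to reduce the identity to two applications of the tower property of conditional expectation; no part of Axioms \ref{cond:1}--\ref{cond:3} is actually needed here, since the statement is a purely probabilistic identity about second moments. First I would note that $g_0(\boldsymbol X)=\expect{Y\mid\boldsymbol X}$ and $h_0(\boldsymbol X_j^c)=\expect{Y\mid\boldsymbol X_j^c}$ are conditional expectations of the square-integrable variable $Y$, hence themselves square-integrable, so that expanding the square in \eqref{eq:chi} is legitimate: $\chi_j=\expect{g_0(\boldsymbol X)^2}-2\,\expect{g_0(\boldsymbol X)\,h_0(\boldsymbol X_j^c)}+\expect{h_0(\boldsymbol X_j^c)^2}$, with all three terms finite by Cauchy--Schwarz.

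Next I would identify each moment functional, evaluated at the true nuisance, with one of these second moments. Conditioning on $\boldsymbol X$ and using $g_0(\boldsymbol X)=\expect{Y\mid\boldsymbol X}$ gives $\expect{m_0(\boldsymbol V;g_0)}=\expect{Y\,g_0(\boldsymbol X)}=\expect{g_0(\boldsymbol X)\expect{Y\mid\boldsymbol X}}=\expect{g_0(\boldsymbol X)^2}$, and the same argument with $\boldsymbol X_j^c$ in place of $\boldsymbol X$ yields $\expect{m_j(\boldsymbol V;h_0)}=\expect{h_0(\boldsymbol X_j^c)^2}$.

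The only slightly less mechanical step is the cross term, which I would handle by conditioning on $\boldsymbol X_j^c$: since $h_0(\boldsymbol X_j^c)$ is $\sigma(\boldsymbol X_j^c)$-measurable and $\sigma(\boldsymbol X_j^c)\subseteq\sigma(\boldsymbol X)$, the tower property gives $\expect{g_0(\boldsymbol X)\mid\boldsymbol X_j^c}=\expect{\expect{Y\mid\boldsymbol X}\mid\boldsymbol X_j^c}=\expect{Y\mid\boldsymbol X_j^c}=h_0(\boldsymbol X_j^c)$, whence $\expect{g_0(\boldsymbol X)\,h_0(\boldsymbol X_j^c)}=\expect{h_0(\boldsymbol X_j^c)^2}$. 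Substituting the three evaluations into the expansion collapses it to $\chi_j=\expect{g_0(\boldsymbol X)^2}-\expect{h_0(\boldsymbol X_j^c)^2}=\expect{m_0(\boldsymbol V;g_0)}-\expect{m_j(\boldsymbol V;h_0)}$, which is the claim. I expect no genuine obstacle beyond careful bookkeeping of which $\sigma$-algebra each conditioning refers to; the result is then ready to feed into the debiased score construction of Section \ref{sec:debiased}, where $\chi_j$ appears as a difference of expectations of the two linear moment functionals $m_0$ and $m_j$.
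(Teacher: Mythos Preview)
Your proposal is correct and follows essentially the same route as the paper's proof: both arguments rest on the tower property applied to the nested $\sigma$-algebras $\sigma(\boldsymbol{X}_j^c)\subseteq\sigma(\boldsymbol{X})$, yielding $\expect{g_0(\boldsymbol{X})\mid\boldsymbol{X}_j^c}=h_0(\boldsymbol{X}_j^c)$ and $\expect{Y\,g_0(\boldsymbol{X})}=\expect{g_0(\boldsymbol{X})^2}$, from which the cross term collapses and the identity follows. The only cosmetic difference is ordering---the paper conditions on $\boldsymbol{X}_j^c$ before expanding the square, whereas you expand first and then treat each term---but the substance is identical, and your observation that Axioms \ref{cond:1}--\ref{cond:3} play no role here is accurate.
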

The proof is deferred to Appendix \ref{appendix:cond_variance}. By this lemma, we can estimate the parameter $\chi_j$, by learning the parameters $\theta_0 \coloneqq \expect{m_0(\boldsymbol{V}; g_0)}$ and $ \theta_j \coloneqq \expect{m_j(\boldsymbol{V}; h_0)}$ separately and them taking their difference. Both $\theta_0$ and $\theta_j$ are expected values of linear moment functionals, so we can apply DoubleML as described in Section \ref{sec:riesz} to obtain $\sqrt{n}$-consistent estimates for them. The resulting estimators $\hat{\theta}_0$ and $\hat{\theta}_j$ have the double robustness property, and so does their difference $\chi_j = \hat{\theta}_0-\hat{\theta}_j$. We can then take advantage of the fast convergence rate, to determine if $\hat{\theta}_0 \approx \hat{\theta}_j$ with a paired $t$-test.
\section{The DRCFS Algorithm}
\label{sec:approach}
\subsection{Overview} Our approach to causal discovery essentially consists of testing whether a feature $X_j$ yields a non-zero average controlled direct effect on the outcome, following Lemma \ref{lemma:parents}. We refer to our approach as the Doubly Robust Causal Feature Selection Algorithm (DRCFS, see Algorithm \ref{alg}). This algorithm consists of the following steps:
\begin{enumerate}[label={$\bullet$},itemsep=0pt,topsep=0pt]
    \item Select a variable $X_j$ to test if $X_j \in \pa{Y}$.
    \item Estimate the parameter $\chi_j$ using DoubleML, as described in Section \ref{sec:debiased}. The resulting estimator $\hat{\chi}_j$ has the double-robustness property.
    \item By Lemmas~\ref{lemma:chi} and \ref{lemma:parents}, the variable $X_j$ is not a parent of $Y$ if and only if $\chi_j = 0$. Use a paired $t$-test for $\hat{\chi}_j $ to select or discard $X_j$ as a parent of $Y$.
\end{enumerate}
This procedure can be iterated for each of the features. Crucially, the estimator $\hat{\chi}_j$ ought to have the double-robustness property. To this end, we resort to Lemma \ref{lemma:cond_variance}. We first estimate the parameters $\theta_0 \coloneqq \expect{m_0(\boldsymbol{V}, g_0)}$ and $ \theta_j \coloneqq \expect{m_j(\boldsymbol{V}, h_0)}$ separately using DoubleML, and then obtain the desired estimator $\hat{\chi}_j$ by taking the difference.
\subsection{Double-Robustness of $\hat{\chi}_j$}
We now show that the estimand $\hat{\chi}_j$ as in Line 16 of Algorithm \ref{alg} has the double-robustness property. To this end, we show that $\hat{\theta}_0$ and $\hat{\theta}_j$ in Algorithm \ref{alg} have the double robustness property. We focus on $\hat{\theta}_0$ since the case for $\hat{\theta}_j$ is analogous. To this end, we can apply the Riesz representation theorem, as described in Section \ref{sec:riesz}, to obtain that $\theta_0 = \expect{\alpha_0(\boldsymbol{X})\cdot m_0(\boldsymbol{V}; g_0)}$, with $\alpha_0 $ the Riesz representer of the moment functional, and $g_0$ the conditional expected value. 

We can use DoubleML as in Section \ref{sec:mixed_bias_property} to derive a debiased score function for the term $\theta_0$. Consider a dataset of $n$ samples $\mathcal{D} = \{(x_{1i}, \dots, x_{mi}, y_i)\}_{i = 1, \dots, n}$ and let $\mathcal{D}_1,\dots, \mathcal{D}_k$ be a disjoint $k$-partition of this dataset. Following Eq. \eqref{eq:orthogonal_score}, we provide an estimator $\hat{\theta}_{0,l}$ for $\theta_0$ on the partition $\mathcal{D}_l$ as
\begin{equation}
\label{eq:new_eq}
\hat{\theta}_{0,l} = \hat{\mathbb{E}}_{ \mathcal{D}_{l}}  [Y \cdot \hat{g}_{l}(\boldsymbol{X}) - Y \cdot \hat{\alpha}_{l}(\boldsymbol{X}) - \hat{\alpha}_{l }(\boldsymbol{X}) \cdot \hat{g}_{l }(\boldsymbol{X}) ] 
\end{equation}
Here, $\hat{\mathbb{E}}_{ \mathcal{D}_{l}}$ denotes the empirical expected value over $\mathcal{D}_{l}$. The function $\hat{\alpha}_l$ is an estimator for the Riesz representer obtained by Eq. \eqref{eq:riesz_estimation} over the complementary sample set $\mathcal{D}_l^c = \mathcal{D} \setminus \mathcal{D}_l$. Note that Eq. \eqref{eq:new_eq} corresponds to Line 5 of Algorithm \ref{alg}. Similarly, $\hat{g}_l$ is an estimator for the conditional expected value, which is obtained as a solution of the $\ell_1$-regularized regression problem \citep{10.1093/biomet/asaa054} over the sample set $\mathcal{D}_l^c = \mathcal{D} \setminus \mathcal{D}_l$. We can also estimate the variance over the samples as in Line 6 of Algorithm \ref{alg}. Estimates for the parameters $\theta_0$ and the variance are then 
\begin{equation*}
\hat{\theta}_{0} = \frac{1}{k} \sum_{l = 1}^k\hat{\theta}_{0,l} \quad \text{and} \quad \hat{\sigma}_{0}^2 = \frac{1}{k} \sum_{l = 1}^k\hat{\sigma}_{0,l}^2.
\end{equation*}
These estimators are given in Line 8 of Algorithm \ref{alg}, and they have the double-robustness property, i.e., 
\begin{equation*}
\sqrt{n}(\hat{\theta}_{0} - \theta_{0})\rightsquigarrow \mathcal{N}(0, \sigma^2_0).
\end{equation*}
Furthermore, the empirical variance $\hat{\sigma}_{\chi_j}^2$ is a $\sqrt{n}$-consistent estimator for $\sigma^2_0$. Similarly, the parameters $\hat{\theta}_{j}$ and $ \hat{\sigma}_{j}^2$ as in Line 15 of Algorithm \ref{alg} are doubly-robust.
\subsection{Statistical Test} For a given confidence interval, testing whether $\hat{\chi}_j \approx 0 $, is equivalent to testing whether the estimates of $\hat{\theta}_j$ and $\hat{\theta}_0$ have the same mean. To accomplish this, we perform paired sample t-tests. Given the presence of multiple dependent tests, in case the data is highly dependent, it is important to control for false discovery rate (FDR) in order to accurately assess the results. For this purpose, we apply the Benjamini-Yekutieli procedure~\citep{benjamini2001control}.
\section{Experiments}
In this section, we evaluate the performance of our algorithm extensively. First, we show the superiority of our method compared to well-established algorithms, using synthetic data consisting of causal structures created by various DGPs. Second, we demonstrate a real-world application of DRCFS on microbiome abundance. Lastly, we further showcase the performance and scalability of our algorithm on bnlearn benchmarks~\citep{bnlearn}.
\subsection{Causal Feature Selection for Synthetic Data}
Here, we discuss the data generating process, comparison of the performance with the baselines, robustness of performance w.r.t. various characteristics of the underlying causal structure such as connectivity level, and quality of estimations and statistical tests.

\textbf{Data Generating Process.}
The DGP generally follows the same procedure as~\citet{soleymani2022causal} to produce direct acyclic graphs (DAGs): (1) Nodes are randomly permutated to form a topological order. (2) For each pair of nodes $X_i$ and $X_j$, where $X_i$ precedes $X_j$ in this order, an edge $X_i \rightarrow X_j$ is added to the graph with probability $p_c$ (connectivity level). (3) The values are assigned to each variable $X$ as a transformation $f$ of the direct causes of that variable, plus posterior additive noise as in Axioms \ref{cond:1}-\ref{cond:3} for each variable $X^\prime \in \boldsymbol{X} \cup \{Y\}$. (4) Each node $X \in \boldsymbol{X}$ is concealed to serve as a unseen confounder with probability $p_h$. The parameters of interest in the DGP are: number of nodes $m$, number of observations $n$, connectivity level $p_c$, transformation function $f$, random variable $\epsilon$ representing the additive noise, probability of hiding each node $p_h$ to serve as unseen confounders. For convenience, different choices of function $f$ are given in~\Cref{sec:design_f}. Let $\{f_1, f_2, \dots, f_k\}$ and $\{\pi_1, \pi_2, \dots, \pi_k\}$ be the potential choices of $f$ and their corresponding probabilities, then $f$ is chosen by $f \sim \sum_{i = 1}^{k} \pi_i \delta_{f_i}$, for each variable $X^\prime \in \boldsymbol{X} \cup \{Y\}$, where $\delta$ is the Dirac probability measure.

\textbf{Baselines.} We compare the performance of our method with a diverse set of causal structure learning and inference for regression algorithms: \textsc{LiNGAM}~\citep{shimizu2006linear}, order-independent \textsc{PC}~\citep{JMLR:v15:colombo14a}, rankPC, rankFCI~\citep{spirtes2000causation, heinze2018causal}, MMHC~\citep{mmhc}, GES~\citep{ges}, rankGES, ARGES (adaptively restricted GES~\citep{arges}), rankARGES, FCI+~\citep{fciplus}, PCI~\citep{shah2020hardness}, CORTH Features~\citep{soleymani2022causal}, Standard Linear Regression, Lasso with exact post-selection inference~\citep{lee2016exact}, Debiased Lasso~\citep{javanmard2015biasing}, Forward Stepwise Regression for active variables~\citep{loftus2014significance,tibshirani2016exact}, Forward Stepwise Regression for all variables~\citep{loftus2014significance,tibshirani2016exact}, LARS for active variables~\citep{efron2004least,tibshirani2016exact}, and LARS for all variables~\citep{efron2004least,tibshirani2016exact}. R Packages "CompareCausalNetworks"\footnote{ \url{https://cran.r-project.org/web/packages/CompareCausalNetworks/index.html}} and "selectiveInference: Tools for Post-Selection Inference"\footnote{ \url{ https://cran.r-project.org/web/packages/selectiveInference/}} are used for a great number of the baselines. 
\begin{figure}[!t] 
  \centering \includegraphics[width=\columnwidth]{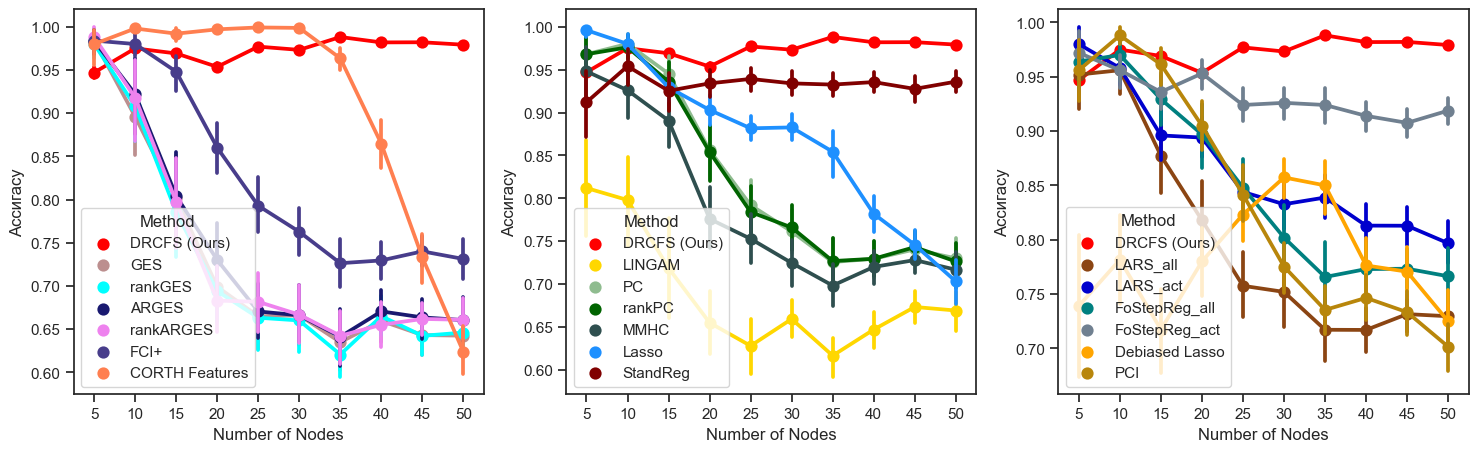}
  \caption{Performance (accuracy) of the algorithms w.r.t. number of nodes $m$ for fully linear causal structures ($f = f_1$ with probability 1), where $p_s = 0.3$, $p_h = 0$, and $\epsilon \sim \mathcal{N}(0,1)$. Each case is averaged over 50 simulations. We use ForestRiesz with identity feature map $\phi(\boldsymbol{X}) = \boldsymbol{X}$. DRCFS's performance shows stability even for large graphs while the baselines suffer in high dimensions. Plots for other metrics are provided in \Cref{sec:add_plots}, \Cref{fig:exp1_csi,fig:exp1_f1}.}
  \label{fig:exp1_acc}
\end{figure}
\begin{figure}[!t] 
  \centering \includegraphics[width=\columnwidth]{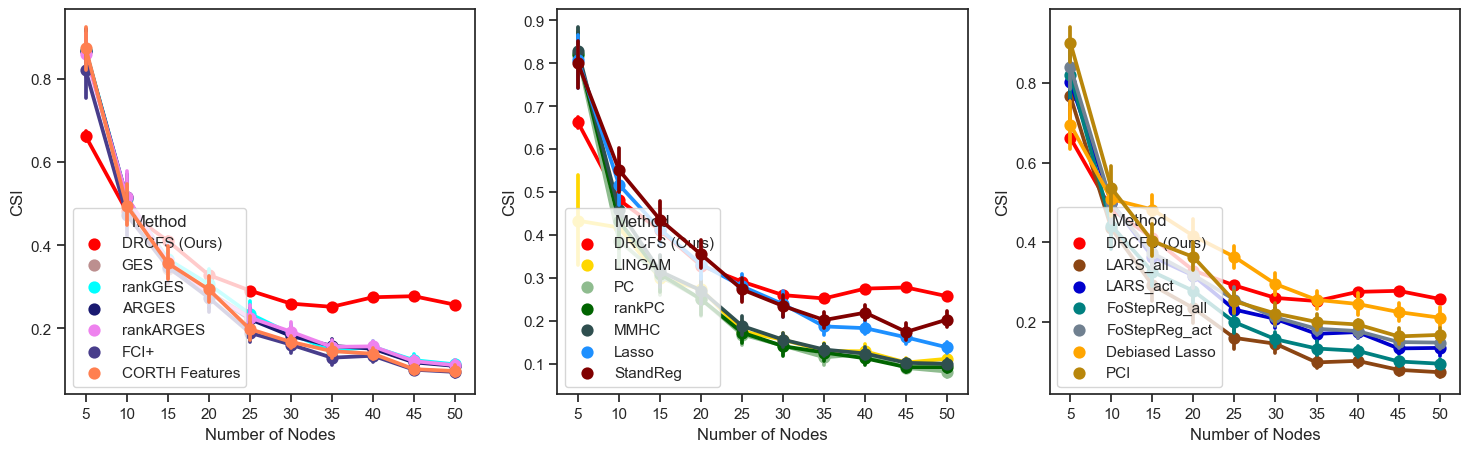}
  \caption{Performance (CSI) of the algorithms w.r.t. number of nodes $m$ for fully Log-sum-exp causal structures ($f = f_6$ with probability 1), where $p_s = 0.5$, $p_h = 0.1$, and $\epsilon \sim \mathcal{N}(0,1)$. Each case is averaged over 50 simulations. We use ForestRiesz with identity feature map $\phi(\boldsymbol{X}) = \boldsymbol{X}$. DRCFS's performance shows stability even for large graphs while the baselines suffer in high dimensions. Plots for other metric are provided in \Cref{sec:add_plots}, \Cref{fig:exp4_f1}.}
  \label{fig:exp4_csi}
\end{figure}
\textbf{Evaluation.} We use accuracy, F1 score, and CSI (discussed in~\Cref{sec:eval_metrics}) as our evaluation metrics. We consider F1 score and CSI because they put emphasis on the number of true positives.
\textbf{Estimation Technique. } In this section, we leverage a special case of Generalized Random Forests~\citep{athey2019generalized}, called ForestRiesz~\citep{automatic_debiased} to estimate the conditional expected value and RR as in Lemma \ref{lemma:cond_variance}.\footnote{Another candidate estimator can be utilized to learn conditional expected value and RR under the presence of complicated structures is RieszNet introduced by~\citet{automatic_debiased}. } We illustrate this technique, by focusing on $g_0$ and $\alpha_0$, since the case for $g_j$, $\alpha_j$ is analogous. Assume that $g_0 $ and $\alpha_0 $ are locally linear with respect to a smooth feature map $\phi$, i.e., $g_0(\boldsymbol{X}) = \langle \phi(\boldsymbol{X}), \beta(\boldsymbol{X})\rangle$ and $\alpha_0(\boldsymbol{X}) = \langle \phi(\boldsymbol{X}), \gamma(\boldsymbol{X})\rangle$ with $\beta(\boldsymbol{X})$ and $\gamma(\boldsymbol{X})$ non-parametric estimators derived by the tree. Then, in order to learn $g_0$, we minimize the square loss $\mathcal{R}(\beta) = \mathbb{E}_n [(\langle \phi(\boldsymbol{X}), \beta(\boldsymbol{X})\rangle - Y)^2]$. This is equivalent to searching for the solution of the equation $$\mathbb{E}_n [(\langle \phi(\boldsymbol{X}), \beta(\boldsymbol{X})\rangle - Y) \phi(\boldsymbol{X})\:|\: \boldsymbol{X}] = 0$$
in the variable $\beta$. Similarly, we learn the RR by minimizing the score $\mathcal{R}(\gamma) = \mathbb{E}_n[\langle \phi(\boldsymbol{X}), \gamma(\boldsymbol{X}) \rangle^2 - 2 \gamma(\boldsymbol{X})^T m(\boldsymbol{V}; \phi(\boldsymbol{X})) ]$. This amounts to solving
\begin{equation*} 
\mathbb{E}_n[\phi(\boldsymbol{X}) \phi(\boldsymbol{X})^T \gamma(\boldsymbol{X}) - m(\boldsymbol{V}; \phi(\boldsymbol{X})) \:|\: \boldsymbol{X}] = 0.
\end{equation*}
Thus, our approach is in line with the works by~\citet{athey2019generalized, automatic_debiased}. 

In order to effectively ensure fair evaluations against baselines, and to account for efficiency, the feature map used in our evaluations within this section is set to the identity function $\phi(\boldsymbol{X}) = \boldsymbol{X}$. However, it should be noted that alternative feature maps may be employed, to cater to specific requirements and prior domain knowledge of the dataset. 
\begin{figure}[!t] 
  \centering \includegraphics[width=\columnwidth]{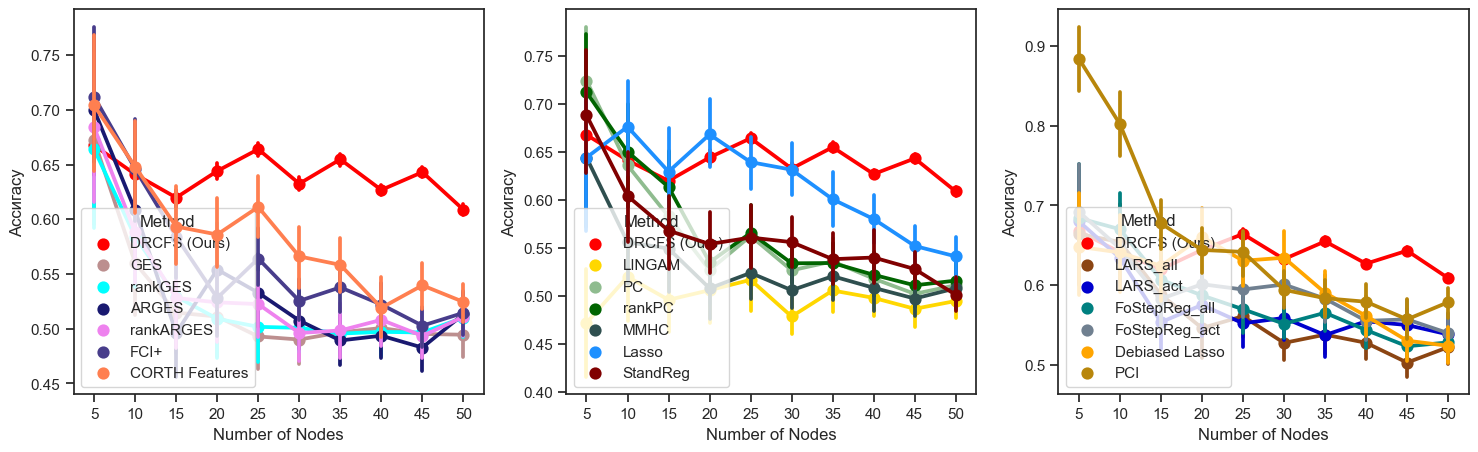}
  \caption{Performance (accuracy) of the algorithms w.r.t. number of nodes $m$ for causal structures with geometric mean relationship ($f = f_5$ with probability 0.8, $f = f_1$ with probability 0.2), where $p_s = 0.5$, $p_h = 0$, and $\epsilon \sim \mathcal{N}(0,1)$. Each case is averaged over 50 simulations. We use ForestRiesz with identity feature map $\phi(\boldsymbol{X}) = \boldsymbol{X}$. DRCFS's performance shows stability even for large graphs while the baselines suffer in high dimensions. Plots for other metric are provided in \Cref{sec:add_plots}, \Cref{fig:exp3_csi,fig:exp3_f1}.}
  \label{fig:exp3_acc}
\end{figure}
\begin{figure}[!t]
  \begin{minipage}[c]{0.5\textwidth}
  \vspace{-30pt}
    \includegraphics[width=\textwidth]{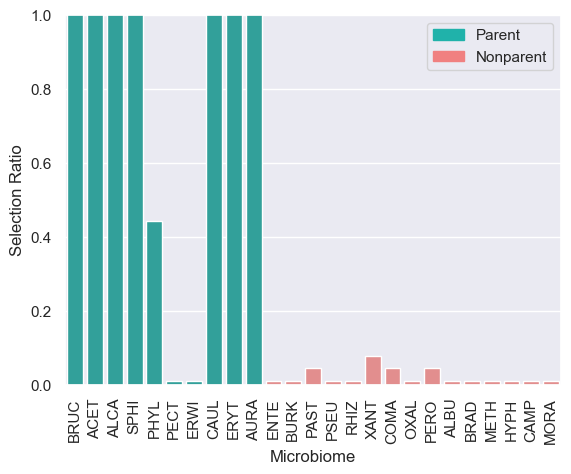}
  \end{minipage}\hfill
  \begin{minipage}[b]{0.45\textwidth}
    \caption{
     The ratio of simulations that each variable is selected by DRCFS to the total number of simulations (30) for linear target $f=f_1$. The ground truth about parental status is given in the legend. DRCDS captures most of the direct causes only with identity feature map $\phi(\boldsymbol{X}) = \boldsymbol{X}$.
    } \label{fig:mic_main_0}
  \end{minipage}
\end{figure}
\textbf{Results.}
Evaluation of the performance of DRCFS and the baselines for linear models ($f = f_1$) are illustrated w,r,t number of nodes in~\Cref{fig:exp1_acc}. DRCFS significantly outperforms the baselines in high dimensions. The same plot for fully nonlinear models (Log-sum-exp $f=f_6$ and geometric mean $f=f_5$) are shown in~\Cref{fig:exp4_csi,fig:exp3_acc}. Note that geometric mean has a highly nonlinear structure that most of the existing models cannot support theoretically. Similar plots for different settings, e.g., hide probability $p_h$, connectivity level $p_c$, linear and nonlinear transformation functions $f$, random variable $\epsilon$ are represented in~\Cref{fig:exp1_f1,fig:exp1_csi,fig:exp2_acc,fig:exp2_csi,fig:exp2_f1,fig:exp4_f1,fig:exp3_csi,fig:exp3_f1} (See~\Cref{sec:add_plots}). DRCFS establishes reasonably good performance w.r.t these settings, even though identity feature map $\phi(\boldsymbol{X}) = \boldsymbol{X}$ is used.

\subsection{Causal Feature Selection for Semi-synthetic Data on Microbiome Abundance}
In this part, to assess the performance of our algorithm with a taste of real-world application, we conduct a semi-synthetic experiment based on microbiome abundance data in plant leaves from~\citet{regalado2020combining}. The dataset contains shotgun sequencing of 275 wild \textit{Arabidopsis Thaliana} leaf microbiomes. This shotgun data provides the ratio of microbial load to plant DNA on the host planet. Microbiome abundance datasets are known to have highly complicated underlying structures with hidden confounders such as the environment, host genetics, and other biological interactions~\citep{lim2021gut, regalado2020combining, yang2022comprehensive}. 

\textbf{Dataset.}
Following the preprocessing, the dataset comprises 625 observations for 25 microorganisms (to serve as confounders $\boldsymbol{X}$) that exhibit the highest level of variations within the dataset. These microorganisms are given in~\Cref{tab:microbiomes}.
The natural target variables of interest for study often include the abundance of bacterial, eukaryotic, and total pathogenic organisms. Guided by this intuition, a subset of 10 confounders is randomly selected as direct causes from the entire set of confounders. Subsequently, target variables are constructed in adherence to Axioms \ref{cond:1}-\ref{cond:3} with different choices of function $f$. 
\textbf{Results.}
The selection ratio by DRCFS for parent and non-parent nodes is illustrated in~\Cref{fig:mic_main_0,fig:mic_main_1} for both linear and nonlinear target functions. Despite a limited number of observations, DRCFS is able to accurately identify most of the direct causes in the linear case and many in the nonlinear case, while maintaining a low false positive rate. Additional plots for other target functions $f$ can be found in~\Cref{fig:mic_appendix} (See~\Cref{sec:add_plots}).
\begin{figure}[!h]
  \begin{minipage}[c]{0.5\textwidth}
  \vspace{-85pt}
    \includegraphics[width=\textwidth]{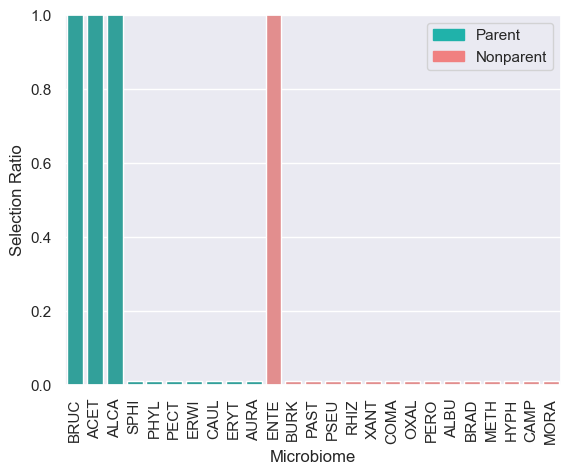}
  \end{minipage}\hfill
  \begin{minipage}[b]{0.45\textwidth}
    \caption{
     The ratio of simulations that each variable is selected by DRCFS to the total number of simulations (30) for Log-sum-exp target $f=f_6$. The ground truth about parental status is given in the legend. Despite the highly complicated structure and limited number of observations, DRCDS captures some of the direct causes only with identity feature map $\phi(\boldsymbol{X}) = \boldsymbol{X}$.
    } \label{fig:mic_main_1}
  \end{minipage}
\end{figure}
\begin{table*}[t]
\centering
\caption{Report on Performance of DRCFS on bnlearn benchmarks. The characteristics of the included networks are provided. To see the performance of DRCFS thoroughly on the underlying causal structure, please refer to \Cref{sec:add_plots}, \Cref{fig:andes,fig:mehra,fig:alarm,fig:arth150}.}
{\scriptsize
\begin{tabular}{l c c c c}
\toprule
\textbf{Category\textbackslash Dataset}  & \textbf{ANDES}  & \textbf{MEHRA}    & \textbf{ALARM}  & \textbf{ARTH150} \\
\midrule
Bayesian Netowkr Type & Discrete & Conditional Linear Gaussian & Discrete & Gaussian \\
Number of nodes & 223 & 24 & 37 & 107 \\
Number of arcs & 338 & 71 & 46 & 150 \\
Number of parameters & 1157 & 324423 & 509 & 364 \\
Average Markov blanket size & 5.61 & 13.75 & 3.51 & 3.35 \\
Average degree & 3.03 & 5.92 & 2.49 & 2.8 \\
Maximum in-degree & 6 & 13 & 4 & 6 \\
\bottomrule
Chosen target name & SNode\_65 & pm2.5 & PRSS & 786\\
\bottomrule
\bottomrule
\textbf{Accuracy} & \textbf{0.973} & \textbf{0.74} & \textbf{0.833} & \textbf{0.915}
\end{tabular}
}
\label{table:bnlearn}
\end{table*}

\subsection{Causal Feature Selection for Bnlearn Benchmarks}

To further demonstrate the performance, scalability, and adaptability of our method across various domains, we consider four additional bnlearn~\citep{bnlearn}\footnote{ \url{https://cran.r-project.org/web/packages/bnlearn/index.html}} benchmarks with different properties. The results and details on these networks are reported in~\Cref{table:bnlearn}. The semi-synthetic datasets that we include are ANDES~\citep{conati1997line} (a very large size discrete Bayesian network), MEHRA~\citep{vitolo2018modeling} (a medium size conditional linear Gaussian Bayesian Network), ALARM~\citep{beinlich1989alarm} (a medium size discrete Bayesian network) and ARTH150~\citep{opgen2007correlation} (a very large size Gaussian Bayesian network). The diagrams of these networks with the inferred direct causes by DRCFS are depicted in \Cref{sec:add_plots}, \Cref{fig:andes,fig:mehra,fig:alarm,fig:arth150}. Notably, our method shows good performance in all instances, with very few false positives and no false negatives. In essence, it effectively captures the entirety of direct causes while maintaining a substantially low incidence of wrongly selected variables.

\section{Discussion}
\textbf{Limitations.} In this work we have focused on the case of learning causal features from observational data rather than full causal discovery. Moreover, while we have shown $\sqrt{n}$-consistency, the expectations in Algorithm \ref{alg} might in practice only be approximated well if enough samples are available to perform the required sample splitting efficiently and without sacrificing approximation quality. Finally, since our method is inspired by double machine learning-based approaches \cite{10.1093/biomet/asaa054}, the focus is on bias rather than variance reduction of estimates. 

\textbf{Conclusion and future work.} We presented DRCFS, a doubly robust feature selection method for identifying causal features even in high-dimensional and nonlinear settings. In extensive experiments including state-of-the-art baselines, we demonstrated that our approach is significantly more accurate across various metrics and across a wide range of settings in synthetic and semi-synthetic datasets. Future work could see the extension of our work, especially to biomedical settings where the evaluation can not be conducted with respect to the ground truth or to representation learning approaches where features first have to be learned before they can be selected. Another potential avenue for future research is the extension to time series data. In the long-term, we hope that these future algorithms as well as the provided feature selection method  will overall enable users to better understand, interpret and explain the results of machine learning-based decision-making. 
\bibliographystyle{abbrvnat}
\bibliography{bibliography.bib}
\newpage
\onecolumn
\renewcommand{\thesection}{\Alph{section}}
\setcounter{section}{0}
\newpage
\onecolumn
\renewcommand{\thesection}{\Alph{section}}
\setcounter{section}{0}
\noindent {\LARGE\textbf{Appendix}}
\section{Structural Causal Models}
\label{appendix:scm}
\begin{definition}[Structural Causal Model (SCM), Definition 2.1 by \citet{10.1214/21-AOS2064}]
\label{def:SCM}
A structural causal model (SCM) is a tuple $\langle \mathtt{I},\mathtt{J}, \boldsymbol{V}, \boldsymbol{U}, \boldsymbol{f}, \mathbb{P}_{\boldsymbol{U}} \rangle $ where
(i) $\mathtt{I}$ is a finite index set of endogenous variables; (ii) $\mathtt{J}$ is a disjoint finite index set of exogenous variables; (iii) $\boldsymbol{V} = \prod_{j \in \mathtt{I}} V_j$ is the product of the domains of the endogenous variables, where each $V_j$ is a standard measurable space; (iv) $\boldsymbol{U} = \prod_{j \in \mathtt{J}} U_j$ is the product of the domains of the exogenous variables, where each $U_j$ is a standard measurable space; (v) $\boldsymbol{f} \colon \boldsymbol{V} \times \boldsymbol{U} \to \boldsymbol{V}$ is a measurable function that specifies the causal mechanism; (vi) $\mathbb{P}_{\boldsymbol{U}} = \prod_{j \in \mathtt{J}} \mathbb{P}_{\boldsymbol{U}_j}$ is a product measure, where $\mathbb{P}_{\boldsymbol{U}_j}$ is a probability measure on $\boldsymbol{U}_j$ for each $j \in \mathtt{J}$.
\end{definition}
In SCMs, the functional relationships between variables are expressed in terms of deterministic equations. This feature allows us to model the cause-effect relationships of the data-generating process (DGP) using \emph{structural equations}. For a given SCM $\langle \mathtt{I},\mathtt{J}, \boldsymbol{V}, \boldsymbol{U}, \boldsymbol{f}, \mathbb{P}_{\boldsymbol{U}} \rangle $ a structural equation specifies an endogenous random variable $V_l$ via a measurable function of the form $V_l = f_{V_l}(\boldsymbol{V}, \boldsymbol{U})$ for all $l \in \mathtt{I}$. A \emph{parent} $i \in \mathtt{I} \cup \mathtt{J}$ of $l$ is any index for which there is no measurable function $g \colon \prod_{j \in \mathtt{I}\setminus \{i\}} V_j \times \boldsymbol{U} \to V_l$ with $f_{V_l} = g$ almost surely. Intuitively, each endogenous variables $V_j$ is specified by its parents together with the exogenous variables, via the structural equations. A structural equations model as in Definition \ref{def:SCM} can be conveniently described with the \emph{causal graph}, a directed graph of the form $\mathcal{G} = (\mathtt{I}\cup \mathtt{J}, \mathcal{E})$. The nodes of the causal graph consist of the entire set of indices for the variables, and the edges are specified by the structural equations, i.e., $\{ j \to l \} \in \mathcal{E}$ iff $j$ is a parent of $l$. Note that the variables in the set $\pa{V_l}$ are indexed by the parent nodes of $l$ in the corresponding graph $\mathcal{G}$.%

\subsection{Interventions}
We define the causal semantics of SCMs, by considering perfect interventions \citep{pearlj}. For a given a SCM as in Definition \ref{def:SCM}, consider a variable $ \boldsymbol{W} \coloneqq \prod_{j \in \mathtt{I}'}V_j$ for a set $\mathtt{I}' \subseteq \mathtt{I}$, and let $\boldsymbol{w}\coloneqq \prod_{j \in \mathtt{I}'}v_j$ be a point of its domain. The perfect intervention $\boldsymbol{W} \gets \boldsymbol{w}$ amounts to replacing the structural equations $V_{j} = f_{V_{j}}(\boldsymbol{V}, \boldsymbol{U})$ with the constant functions $V_{j} \equiv v_{j}$ for all $j \in \mathtt{I}'$. We denote with $V_l \mid do(\boldsymbol{w})$ the variable $V_l$ after performing the intervention. This procedure define a new probability distribution $\pr{}{v_l \mid do(\boldsymbol{w})}$, which we refer to as interventional distribution. This distribution entails the following information: “Given that we have observed
$\boldsymbol{W} = \boldsymbol{w}$, what would $V_l$ have been had we set $do(\boldsymbol{w})$, instead of the value $\boldsymbol{W}$ had actually taken?”.
\section{Unique Solvability} 
\label{appendix:uniquesolv}
We introduce the notions of solvability and unique solvability as defined by \cite{}. These notions describe the existence and uniqueness of measurable solution functions for the structural
equations of a given subset of the endogenous variables. Solvability of an SCM is a sufficient and necessary condition for the existence of a solution of an SCM, and unique solvability implies the uniqueness of the induced observational distribution.

The notion of solvability is defined as follows.
\begin{definition}[Solvability, following Definition 3.1 by \cite{10.1214/21-AOS2064}]
Consider an SCM $\langle \mathtt{I},\mathtt{J}, \boldsymbol{V}, \boldsymbol{U}, \boldsymbol{f}, \mathbb{P}_{\boldsymbol{U}} \rangle $. We say that the SCM is solvable if there exists a measurable mapping $g\colon \mathbf{V}\rightarrow \mathbf{U}$ such that $\boldsymbol{v} = g(\boldsymbol{u}) \Rightarrow \boldsymbol{v} = \boldsymbol f (\boldsymbol{v},\boldsymbol{u})$ almost surely. 
\end{definition}
The unique solvability of an SCM is a stronger notion than mere solvability, and is defined as follows.
\begin{definition}[Unique Solvability, following Definition 3.3 by \cite{10.1214/21-AOS2064}]
Consider an SCM $\langle \mathtt{I},\mathtt{J}, \boldsymbol{V}, \boldsymbol{U}, \boldsymbol{f}, \mathbb{P}_{\boldsymbol{U}} \rangle $. We say that the SCM is uniquely solvable if there exists a measurable mapping $g\colon \mathbf{V}\rightarrow \mathbf{U}$ such that
\begin{equation*}
\boldsymbol{v} = g(\boldsymbol{u}) \Leftrightarrow \boldsymbol{v} = \boldsymbol f (\boldsymbol{v},\boldsymbol{u})
\end{equation*}
almost surely. 
\end{definition}
The unique solvability condition essentially ensures that there exists a measurable solution for the structural equations, and that any possible solution induces the same observational distribution. \citet{10.1214/21-AOS2064} provide the following necessary and sufficient conditions for the unique solvability of an SCM.
\begin{theorem}[Following Theorem 3.6 by \citet{10.1214/21-AOS2064}]
Consider a structural causal model $\langle \mathtt{I},\mathtt{J}, \boldsymbol{V}, \boldsymbol{U}, \boldsymbol{f}, \mathbb{P}_{\boldsymbol{U}} \rangle $. Then, the system of structural equations $\boldsymbol{V} = \boldsymbol f (\boldsymbol{V},\boldsymbol{U})$ has a unique solution almost surely, if and only if the SCM is uniquely solvable. Furthermore, if the SCM is uniquely solvable, then there exists a solution, and all solutions have the same observational distribution.
\end{theorem}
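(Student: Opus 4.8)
The plan is to prove the equivalence by passing to the pointwise ``solution set'' map $\boldsymbol{u} \mapsto S(\boldsymbol{u}) \coloneqq \{\boldsymbol{v} \in \boldsymbol{V} : \boldsymbol{v} = \boldsymbol{f}(\boldsymbol{v},\boldsymbol{u})\}$ and relating the statement ``the random system $\boldsymbol{V} = \boldsymbol{f}(\boldsymbol{V},\boldsymbol{U})$ has an a.s.\ unique solution'' to the statement ``$S(\boldsymbol{u})$ is a singleton for $\mathbb{P}_{\boldsymbol{U}}$-almost every $\boldsymbol{u}$''. The only direction that requires real work is reconstructing a \emph{measurable} solution function $g$ out of the a.s.-uniqueness hypothesis; everything else is bookkeeping with null sets with respect to $\mathbb{P}_{\boldsymbol{U}}$.

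For the ``if'' direction, assume unique solvability, so there is a measurable $g$ with $\boldsymbol{v} = g(\boldsymbol{u}) \Leftrightarrow \boldsymbol{v} = \boldsymbol{f}(\boldsymbol{v},\boldsymbol{u})$ for $\mathbb{P}_{\boldsymbol{U}}$-a.e.\ $\boldsymbol{u}$. First I would set $\boldsymbol{V} \coloneqq g(\boldsymbol{U})$: this is a well-defined random vector on the domain of $\boldsymbol{U}$, and applying the displayed equivalence at $\boldsymbol{u} = \boldsymbol{U}(\omega)$ shows it satisfies $\boldsymbol{V} = \boldsymbol{f}(\boldsymbol{V},\boldsymbol{U})$ almost surely, giving existence. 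For a.s.-uniqueness, any other solution $\boldsymbol{V}'$ satisfies $\boldsymbol{V}' = \boldsymbol{f}(\boldsymbol{V}',\boldsymbol{U})$ a.s., hence $\boldsymbol{V}' \in S(\boldsymbol{U}) = \{g(\boldsymbol{U})\}$ a.s., so $\boldsymbol{V}' = g(\boldsymbol{U}) = \boldsymbol{V}$ a.s.; in particular every solution has the same law, namely the pushforward $g_\ast \mathbb{P}_{\boldsymbol{U}}$, which is exactly the assertion that the observational distribution is well defined. This already delivers the ``furthermore'' clause.

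For the ``only if'' direction, suppose the system admits a solution and all solutions agree a.s. Consider $A \coloneqq \{(\boldsymbol{v},\boldsymbol{u}) : \boldsymbol{v} = \boldsymbol{f}(\boldsymbol{v},\boldsymbol{u})\} \subseteq \boldsymbol{V}\times\boldsymbol{U}$, which is measurable since $\boldsymbol{f}$ is measurable and $\boldsymbol{V}$ is a standard measurable space. I would then show that $B \coloneqq \{\boldsymbol{u} : S(\boldsymbol{u}) \text{ is a singleton}\}$ has full $\mathbb{P}_{\boldsymbol{U}}$-measure: fibres $S(\boldsymbol{u})$ cannot be empty on a positive-measure set, since a solution $\tilde{\boldsymbol{V}}$ must lie in $S(\boldsymbol{U})$ a.s.; and fibres cannot have two or more points on a positive-measure set, for otherwise a measurable selection theorem (Jankov--von Neumann, or Lusin--Novikov for Borel sets with countable sections, applicable because all spaces are standard) would yield two measurable solutions that differ with positive probability, contradicting a.s.-uniqueness. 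On $B$, define $g(\boldsymbol{u})$ to be the unique element of $S(\boldsymbol{u})$; restricting $A$ to $B$ gives a measurable set with singleton sections, so the Lusin--Novikov uniformization theorem makes $g$ measurable on $B$, and I extend it to all of $\boldsymbol{U}$ by an arbitrary measurable choice off the null set $\boldsymbol{U}\setminus B$. By construction $\boldsymbol{v} = g(\boldsymbol{u}) \Leftrightarrow \boldsymbol{v} = \boldsymbol{f}(\boldsymbol{v},\boldsymbol{u})$ for every $\boldsymbol{u}\in B$, i.e.\ $\mathbb{P}_{\boldsymbol{U}}$-almost surely, which is precisely unique solvability.

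The main obstacle is this measurable-selection step: converting the purely set-theoretic ``every fibre $S(\boldsymbol{u})$ is a singleton'' into an honest measurable map $g$, and symmetrically ruling out multi-point fibres by manufacturing two distinct measurable solutions. This is exactly where the standing assumption that $\boldsymbol{V}$ and $\boldsymbol{U}$ are standard measurable spaces is essential, and where one must keep ``almost surely'' uniformly with respect to $\mathbb{P}_{\boldsymbol{U}}$ so that redefining $g$ on a null set is harmless. Everything else --- existence of $g(\boldsymbol{U})$, a.s.-uniqueness, and the identification of the observational law with $g_\ast\mathbb{P}_{\boldsymbol{U}}$ --- is routine once $g$ is available, and since the present statement merely restates Theorem~3.6 of \citet{10.1214/21-AOS2064}, the cited proof can be invoked verbatim for the technical uniformization lemmas.
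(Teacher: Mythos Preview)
The paper does not supply its own proof of this theorem: it is stated in Appendix~B purely as a quotation of Theorem~3.6 of \citet{10.1214/21-AOS2064}, with no accompanying argument. So there is nothing in the paper to compare your proposal against beyond the bare citation.

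That said, your outline is a faithful reconstruction of the standard route to such a result. The ``if'' direction and the ``furthermore'' clause are immediate once one sets $\boldsymbol{V}=g(\boldsymbol{U})$, exactly as you do. For the ``only if'' direction, reducing to the fibrewise statement that $S(\boldsymbol{u})$ is $\mathbb{P}_{\boldsymbol{U}}$-a.s.\ a singleton, and then invoking a uniformization/measurable-selection theorem (Jankov--von~Neumann or Lusin--Novikov) on the Borel set $A=\{(\boldsymbol{v},\boldsymbol{u}):\boldsymbol{v}=\boldsymbol{f}(\boldsymbol{v},\boldsymbol{u})\}$ in the product of standard Borel spaces, is precisely the mechanism used in \citet{10.1214/21-AOS2064}. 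Your identification of the one nontrivial step---manufacturing two distinct measurable selections on the set where $|S(\boldsymbol{u})|\ge 2$ to contradict a.s.\ uniqueness, and the need for standardness of $\boldsymbol{V},\boldsymbol{U}$ to make the selection theorems applicable---is on point. One small caveat: the sets $\{\boldsymbol{u}:S(\boldsymbol{u})\neq\emptyset\}$ and $\{\boldsymbol{u}:|S(\boldsymbol{u})|\ge 2\}$ are in general only analytic (projections of Borel sets), so strictly speaking you work with universally measurable selections and then complete; this is harmless but worth stating explicitly rather than folding into ``measurable''.
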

\section{Proof of Lemma \ref{lemma:parents}}
\label{lemmaparents}
\lemmaparents*
\begin{proof}
Note that $X_j \not\in \mathsf{Pa}(Y)$ if and only if $Y\mid do(x_j,\boldsymbol{x}_j^c) = Y\mid do(x'_j,\boldsymbol{x}_j^c)$ for all possible interventional values $x_j, \boldsymbol{x}_j^c$.
Hence, the first part of the claim follows by showing that $Y\mid do(x_j,\boldsymbol{x}_j^c) = Y\mid do(x'_j,\boldsymbol{x}_j^c)$ if and only if $\acde(x_j, x'_j| \boldsymbol{x}_j^c) = 0$ almost surely. If $Y\mid do(x_j,\boldsymbol{x}_j^c) = Y\mid do(x'_j,\boldsymbol{x}_j^c)$, it directly follows that $\acde(x_j, x'_j| \boldsymbol{x}_j^c) = 0$ almost surely, so it only remains to establish the converse.
To this end, suppose that $\acde(x_j, x'_j| \boldsymbol{x}_j^c) = 0$, and define the group $\boldsymbol{Z} = \mathsf{Pa}(Y)$ consisting of all the parents of the outcome.
Note that $\boldsymbol{Z} \subseteq \{X_j\} \cup \boldsymbol{X}_j^c$, since $\{X_j\} \cup \boldsymbol{X}_j^c$ consists of all observed variables of the model.
Hence, the intervention $\{X_j, \boldsymbol{X}_j^c\} \gets \{x_j, \boldsymbol{x}_j^c\}$ defines an intervention on the parents $\boldsymbol{Z} \gets \boldsymbol{z}$, with $\boldsymbol{z}$ a sub-vector of $\{x_j, \boldsymbol{x}_j^c\}$.
Further, we can write the potential outcome as $Y\mid do(x_j, \boldsymbol{x}_j^c) = f(\boldsymbol{z}) + \varepsilon $.
Similarly, the intervention $\{X_j, \boldsymbol{X}_j^c\} \gets \{x_j', \boldsymbol{x}_j^c\}$ defines an intervention of the form $\boldsymbol{Z} \gets \boldsymbol{z}'$, and it follows that $Y\mid do(x_j', \boldsymbol{x}_j^c) = f(\boldsymbol{z}') + \varepsilon$.
Therefore,
\begin{equation}
\label{eq:3}
    f(\boldsymbol{z}) + \expect{\varepsilon} = \expect{Y\mid do(x_j,\boldsymbol{x}_j^c)} = \expect{Y\mid do(x_j',\boldsymbol{x}_j^c)} = f(\boldsymbol{z}')+ \expect{\varepsilon}.
\end{equation}
where the first and the third equalities follow since $\varepsilon$ is exogenous independent noise, and the second equality follows since $\acde(x_j, x'_j| \boldsymbol{x}_j^c) = 0$. From Eq. \eqref{eq:3} we conclude that $Y\mid do(x_j,\boldsymbol{x}_j^c) = Y\mid do(x'_j,\boldsymbol{x}_j^c)$, as claimed.
\end{proof}
\section{Proof of Lemma \ref{lemma:chi}}
\label{lemmachi}
\lemmachi*
\begin{proof}
We first show that the claim follows if
\begin{equation}
\label{eq:2_new}
    \expect{Y \mid do(x_j, \boldsymbol{x}_j^c)} = \expect{Y \mid x_j, \boldsymbol{x}_j^c} \quad \text{a.s.},
\end{equation}
and then we will prove Eq. \eqref{eq:2_new}. To this end, assume that Eq. \eqref{eq:2_new} holds and suppose that $\chi_j = 0$, i.e., $\expect{Y | x_j, \boldsymbol{x}_j^c} = \expect{Y | \boldsymbol{x}_j^c} = \expect{Y | x_j' , \boldsymbol{x}_j^c}$ a.s. Then, 
\begin{equation*}
    \expect{Y\mid do(x_j, \boldsymbol{x}_j^c)} = \expect{Y \mid x_j, \boldsymbol{x}_j^c} = \expect{Y \mid x_j', \boldsymbol{x}_j^c} = \expect{Y\mid do(x_j', \boldsymbol{x}_j^c)} \quad \text{a.s.}
\end{equation*}
Here, the first and third equalities follow from Eq. \eqref{eq:2_new}. It follows that $\acde(x_j, x'_j| \boldsymbol{z}) = 0$ a.s. Conversely, suppose that Eq. \eqref{eq:2_new} holds, and that $\acde(x_j, x'_j| \boldsymbol{z}) = 0$. Then, it holds that $\expect{Y\mid do(x_j, \boldsymbol{x}_j^c)} = \expect{Y\mid do(x_j', \boldsymbol{x}_j^c)}$ a.s., that is,
\begin{equation}
\label{eq:3_new}
\expect{Y\mid do(x_j, \boldsymbol{x}_j^c)} = \mathbb{E}\left [\expect{Y\mid do(x_j, \boldsymbol{x}_j^c)} \mid \boldsymbol{x}_j^c  \right ].
\end{equation}
Hence,
\begin{align*}
    \expect{Y \mid x_j, \boldsymbol{x}_j^c } & = \expect{Y\mid do(x_j, \boldsymbol{x}_j^c)} & [\text{by Eq. \eqref{eq:2_new}}]\\
    & = \mathbb{E} [\expect{Y\mid do(x_j, \boldsymbol{x}_j^c)} \mid \boldsymbol{x}_j^c  ] & [\text{by Eq. \eqref{eq:3_new}}]\\
    & = \mathbb{E}\left [\expect{Y\mid x_j, \boldsymbol{x}_j^c} \mid \boldsymbol{x}_j^c \right ] & [\text{by Eq. \eqref{eq:2_new}}]\\
    & = \expect{Y \mid \boldsymbol{x}_j^c }, & 
\end{align*}
and the claim follows. 

We conclude the proof by showing that Eq. \eqref{eq:2_new} holds. To this end, define the group $\boldsymbol{Z} = \mathsf{Pa}(Y)$ consisting of all the parents of the outcome. Note that $\boldsymbol{Z} \subseteq \{X_j\} \cup \boldsymbol{X}_j^c$, since $\{X_j\} \cup \boldsymbol{X}_j^c$ consists of all observed variables of the model. By Axioms (A)-(C), the outcome can be described as $Y = f(\boldsymbol{Z}) + \varepsilon$, where $\varepsilon$ is independent of $\{X_j\} \cup \boldsymbol{X}_j^c$. Hence by Rule 2 of the do-calculus~\citep[page~85]{pearlj}, $Y\mid do(x_j, \boldsymbol{x}_j^c) \sim Y\mid x_j, \boldsymbol{x}_j^c$, because $Y$ becomes independent of $\{X_j\} \cup \boldsymbol{X}_j^c$ once all arrows from $Z$ to $Y$ are removed from the graph of the DGP. Therefore, $\expect{Y\mid do(x_j,\boldsymbol{x}_j^c)} = \expect{Y\mid x_j,\boldsymbol{x}_j^c}$.
%
\end{proof}

\section{Proof of Lemma \ref{lemma:cond_variance}}
\label{appendix:cond_variance}
\condvariance*
\begin{proof}
Since $\boldsymbol{X}_j^c \subset \boldsymbol{X}$, we have by the tower property of the expectation~\cite{Williams-1991} that
\begin{align*}
    \chi_j & =  \expect{\left (\expect{Y\mid \boldsymbol{X}} - \expect{Y\mid \boldsymbol{X}_j^c} \right )^2} \\
    & = \expect{\expect{\left (\expect{Y\mid \boldsymbol{X}} - \expect{Y\mid \boldsymbol{X}_j^c} \right )^2\mid \boldsymbol{X}_j^c}} \\
    & = \expect{\expect{(\expect{Y\mid \boldsymbol{X}}^2 - \expect{Y\mid \boldsymbol{X}}  \expect{Y\mid \boldsymbol{X}_j^c})\mid \boldsymbol{X}_j^c}} \\
    & = \expect{\expect{Y\mid \boldsymbol{X}}^2} - \expect{\expect{(\expect{Y\mid \boldsymbol{X}}  \expect{Y\mid \boldsymbol{X}_j^c})\mid \boldsymbol{X}_j^c}}\\
    & = \expect{\expect{Y\mid \boldsymbol{X}}^2} - \expect{\expect{Y\mid \boldsymbol{X}_j^c}^2}\\
    & = \expect{Y\expect{Y\mid \boldsymbol{X}}} - \expect{Y\expect{Y\mid \boldsymbol{X}_j^c}}.
\end{align*}
The claim follows since $m_0(\boldsymbol{V}; g_0) = Y\expect{Y\mid \boldsymbol{X}}$ and $m_j(\boldsymbol{V}; h_0) = Y\expect{Y\mid \boldsymbol{X}_j^c}$.
\end{proof}

\section{Experimental Setup}

\subsection{Evaluation Metrics} 
\label{sec:eval_metrics}
We use Accuracy, F1 Score (harmonic mean of precision and sensitivity) and Critical Success Index (CSI) as metrics to assess the performance of the algorithms. Given the number of true positives TP, true negatives TN, false positives FP, and false negatives FN, these metrics are defined as,
\begin{multicols}{3}
    \begin{enumerate}[label={$\bullet$},itemsep=0pt,topsep=0pt]
        \item $\textrm{ACC} = \ffrac{\textrm{TP} + \textrm{TN}}{\textrm{TP} + \textrm{TN} + \textrm{FP} + \textrm{FN}}$
        \item $\textrm{F1} = \ffrac{2 \textrm{TP}}{2 \textrm{TP} + \textrm{FP} + \textrm{FN}}$
        \item $\textrm{CSI} = \ffrac{\textrm{TP}}{\textrm{TP} + \textrm{FP} + \textrm{FN}}$
    \end{enumerate}
\end{multicols}
The reason that we consider F1 score, and CSI is the emphasis placed on the number of true positives within their calculations.
%

\subsection{Designs for Transformation Function $f$} \label{sec:design_f}
Distributions over the different choices of the transformation function $f$ are used in the experiments to generate the variables based on $X = f(\pa{X}) + \varepsilon\textrm{, for a subset }\pa{X}\subseteq \boldsymbol{X}$:

\begin{multicols}{2}
\begin{enumerate}[label={$\bullet$},itemsep=0pt,topsep=0pt]
   \item \textbf{Linear}:
    \begin{equation*}
        f_1(\pa{X}) = a\sum_{X^\prime \in \pa{X}} X^\prime + b,
    \end{equation*}
    with $a = 0.5$ and $b = 0$, unless stated exactly otherwise.
    \item \textbf{Sum-sqrt}:
    \begin{equation*}
        f_2(\pa{X}) = a\sum_{X^\prime \in \pa{X}} \sqrt{|X^\prime|} + b,
    \end{equation*}
    with $a = 0.5$ and $b = 0$, unless stated exactly otherwise.
   \item \textbf{Sum-sine}:
    \begin{equation*}
        f_3(\pa{X}) = a\sum_{X^\prime \in \pa{X}} \sin({c.X^\prime}) + b,
    \end{equation*}
    with $a = 1, b = 0$ and $c = 0.5$, unless stated exactly otherwise.
    \item \textbf{Sum-tanh}:
    \begin{equation*}
        f_4(\pa{X}) = a\sum_{X^\prime \in \pa{X}} \tanh({c.X^\prime}) + b,
    \end{equation*}
    with $a = 1, b = 0$ and $c = 2$, unless stated exactly otherwise.
    \item \textbf{Geometric mean}:
    \begin{equation*}
        f_5(\pa{X}) = a {\prod_{X^\prime \in \pa{X}} |{X^\prime}|}^{\ffrac{1}{\mathbf{card}(\pa{X})}} + b,
    \end{equation*}
    with $a = 3$ and $b = 0.1$, unless stated exactly otherwise.
    \item \textbf{Log-sum-exp}:
    \begin{equation*}
        f_6(\pa{X}) = a \log{(\sum_{X^\prime \in \pa{X}} e^{X^\prime})} + b,
    \end{equation*}
    with $a = 1$ and $b = \log 2$, unless stated exactly otherwise.
    \item \textbf{Sqrt-sum}:
    \begin{equation*}
        f_7(\pa{X}) = a\sqrt{|\sum_{X^\prime \in \pa{X}} X^\prime|} + b,
    \end{equation*}
    with $a = 1$ and $b = 0$, unless stated exactly otherwise.
\end{enumerate}
\end{multicols}

\subsection{Semi-synthetic Data on
Microbiome Abundance}

The 25 covariates that demonstrate the highest variations in microbiome abundance within the dataset provided by~\citet{regalado2020combining} are listed in~\Cref{tab:microbiomes}. These covariates subsume a diverse set of Bacteria/Eukaryote groups that are common in leaves, soil, and water.
\begin{table}[hb]
\centering
\caption{25 covariates of the microbiome abundance with highest variations within the dataset provided by~\citet{regalado2020combining}.}
\begin{tabular}{l|c}
\multicolumn{1}{c}{\textbf{Microbiome}} & \multicolumn{1}{c}{\textbf{Abbreviation}} \\ \hline
Enterobacteriaceae     & ENTE  \\ \hline
Burkholderiaceae        & BURK     \\ \hline
Pasteurellaceae         & PAST     \\ \hline
Brucellaceae          & BRUC   \\ \hline
Pseudomonadaceae      & PSEU   \\ \hline
Acetobacteraceae      &  ACET   \\ \hline
Alcaligenaceae     &  ALCA   \\ \hline
Rhizobiaceae      &  RHIZ  \\ \hline
Sphingomonadaceae     &  SPHI   \\ \hline
\end{tabular}
\hfill\begin{tabular}{l|c}
\multicolumn{1}{c}{\textbf{Microbiome}} & \multicolumn{1}{c}{\textbf{Abbreviation}} \\ \hline
Xanthomonadaceae     &  XANT   \\ \hline
Comamonadaceae      &  COMA   \\ \hline
Phyllobacteriaceae     &  PHYL   \\ \hline
Oxalobacteraceae      &  OXAL   \\ \hline
Peronosporaceae      &  PERO   \\ \hline
Albuginaceae      &  ALBU   \\ \hline
Pectobacteriaceae      &  PECT   \\ \hline
Bradyrhizobiaceae      &  BRAD   \\ \hline
\end{tabular}
\hfill
\begin{tabular}{l|c}
\multicolumn{2}{c}{}  \\ 
\multicolumn{1}{c}{\textbf{Microbiome}} & \multicolumn{1}{c}{\textbf{Abbreviation}} \\ \hline
Erwiniaceae      &  ERWI   \\ \hline
Caulobacteraceae      &  CAUL   \\ \hline
Methylobacteriaceae      &  METH   \\ \hline
Hyphomicrobiaceae      &  HYPH   \\ \hline
Erythrobacteraceae      &  ERYT   \\ \hline
Campylobacteraceae      &  CAMP  \\ \hline
Aurantimonadaceae      &  AURA \\ \hline
Moraxellaceae      &  MORA   \\ \hline
\end{tabular}
\label{tab:microbiomes}
\end{table}

\newpage
\section{Additional Plots} \label{sec:add_plots}
In this section, additional plots for the experiments are provided.

\begin{figure}[h]
  \centering \includegraphics[width=\columnwidth]{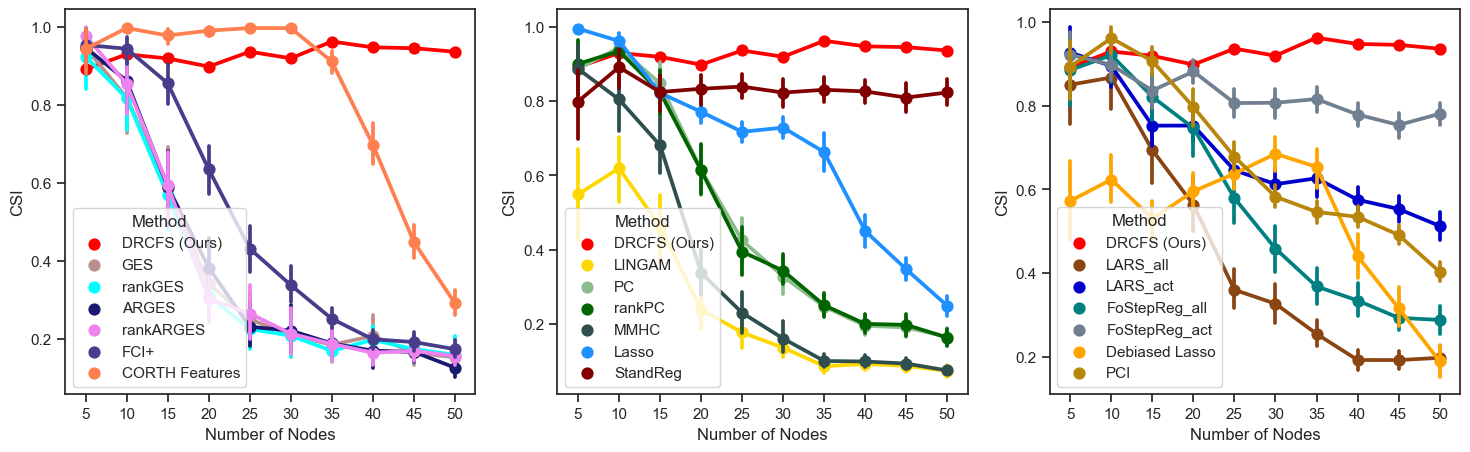}
  \caption{Performance (CSI) of the algorithms w.r.t. number of nodes $m$ for fully linear causal structures ($f = f_1$  with probability 1), where $p_s = 0.3$, $p_h = 0$, and $\epsilon \sim \mathcal{N}(0,1)$. Each case is averaged over 50 simulations. ForestRiesz with identity feature map $\phi(\boldsymbol{X}) = \boldsymbol{X}$ has been used in these experiments. DRCFS's performance shows stability even for large graphs while the baselines suffer in high dimensions.} \label{fig:exp1_csi}
  \vspace{45pt}
\end{figure}

\begin{figure}[h]
  \centering \includegraphics[width=\columnwidth]{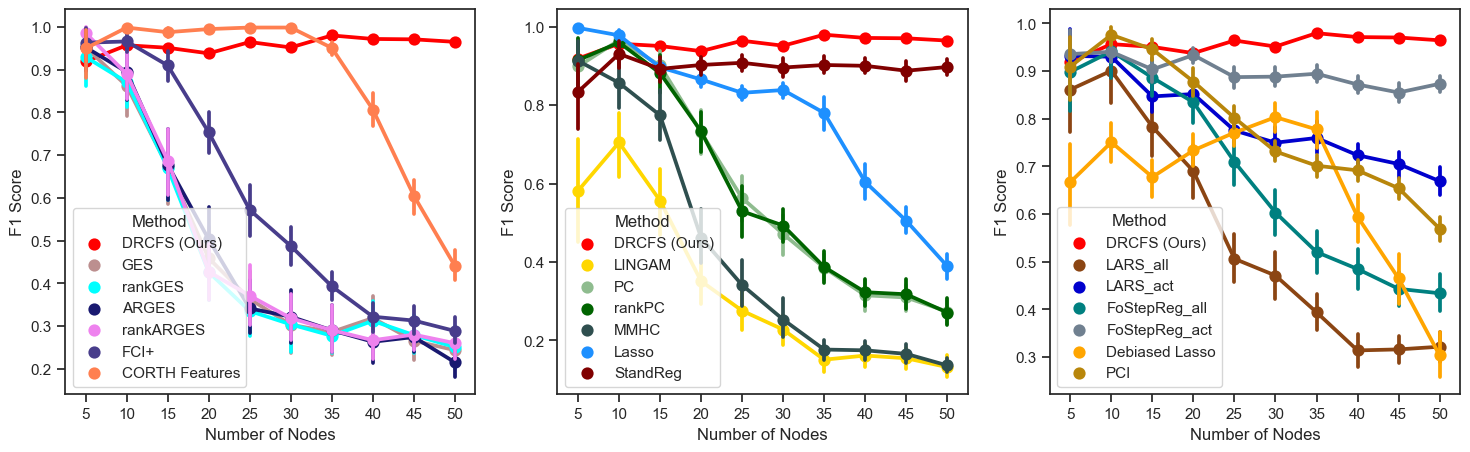}
  \caption{Performance (F1 score) of the algorithms w.r.t. number of nodes $m$ for fully linear causal structures ($f = f_1$  with probability 1), where $p_s = 0.3$, $p_h = 0$, and $\epsilon \sim \mathcal{N}(0,1)$. Each case is averaged over 50 simulations. ForestRiesz with identity feature map $\phi(\boldsymbol{X}) = \boldsymbol{X}$ has been used in these experiments. DRCFS's performance shows stability even for large graphs while the baselines suffer in high dimensions.} \label{fig:exp1_f1}
\end{figure}

\begin{figure}[h]
  \centering \includegraphics[width=\columnwidth]{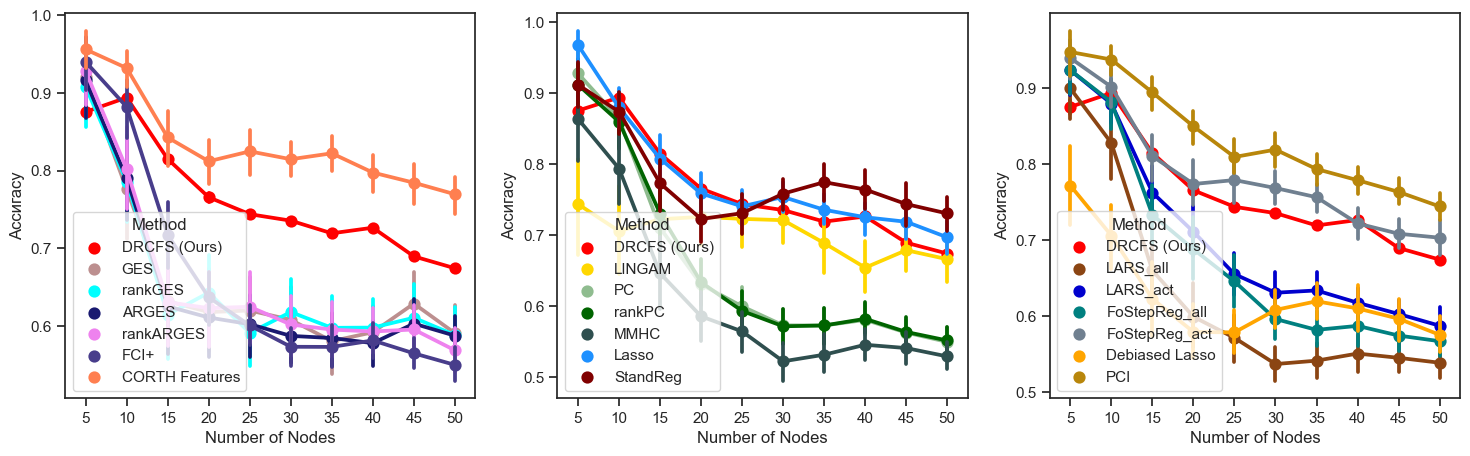}
  \caption{Performance (Accuracy) of the algorithms w.r.t. number of nodes $m$ for causal structures defined as linear combinations of nonlinear functions ($f = f_2$  with probability 0.5, $f = f_3$  with probability 0.25, $f = f_4$  with probability 0.25), where $p_s = 0.5$, $p_h = 0$, and $\epsilon \sim \beta(2,5)$. Each case is averaged over 50 simulations. ForestRiesz with identity feature map $\phi(\boldsymbol{X}) = \boldsymbol{X}$ has been used in these experiments. DRCFS shows reasonably good performance, even though the identity feature map is used. Nevertheless, alternative feature maps taking into account prior domain knowledge of the dataset could be used. CORTH Features dominates others because in the regime that the noise is defined, summations of non-linear terms in DGP fro the target variable act approximately linear, hence, these results are consistent with~\citet{soleymani2022causal}.} \label{fig:exp2_acc}
\end{figure}

\begin{figure}[h]
  \centering \includegraphics[width=\columnwidth]{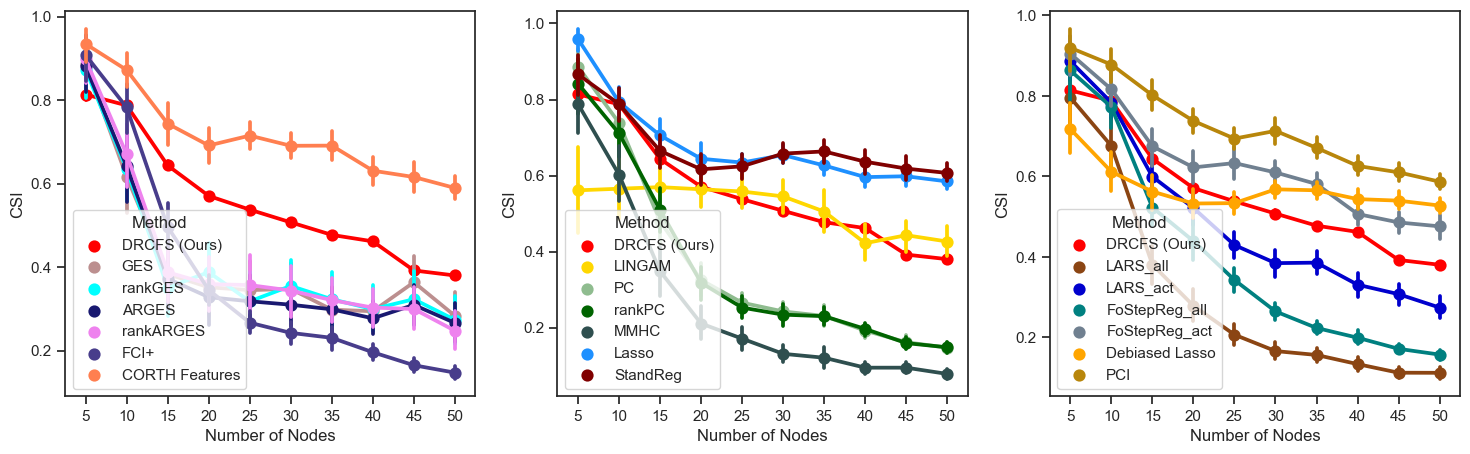}
  \caption{Performance (CSI) of the algorithms w.r.t. number of nodes $m$ for causal structures defined as linear combinations of nonlinear functions ($f = f_2$  with probability 0.5, $f = f_3$  with probability 0.25, $f = f_4$  with probability 0.25), where $p_s = 0.5$, $p_h = 0$, and $\epsilon \sim \beta(2,5)$. Each case is averaged over 50 simulations. ForestRiesz with identity feature map $\phi(\boldsymbol{X}) = \boldsymbol{X}$ has been used in these experiments. DRCFS shows reasonably good performance, even though the identity feature map  is used. Nevertheless, alternative feature maps taking into account prior domain knowledge of the dataset could be used. CORTH Features dominates others because in the regime that the noise is defined, summations of non-linear terms in DGP fro the target variable act approximately linear, hence, these results are consistent with~\citet{soleymani2022causal}.} \label{fig:exp2_csi}
\end{figure}

\begin{figure}
  \centering \includegraphics[width=\columnwidth]{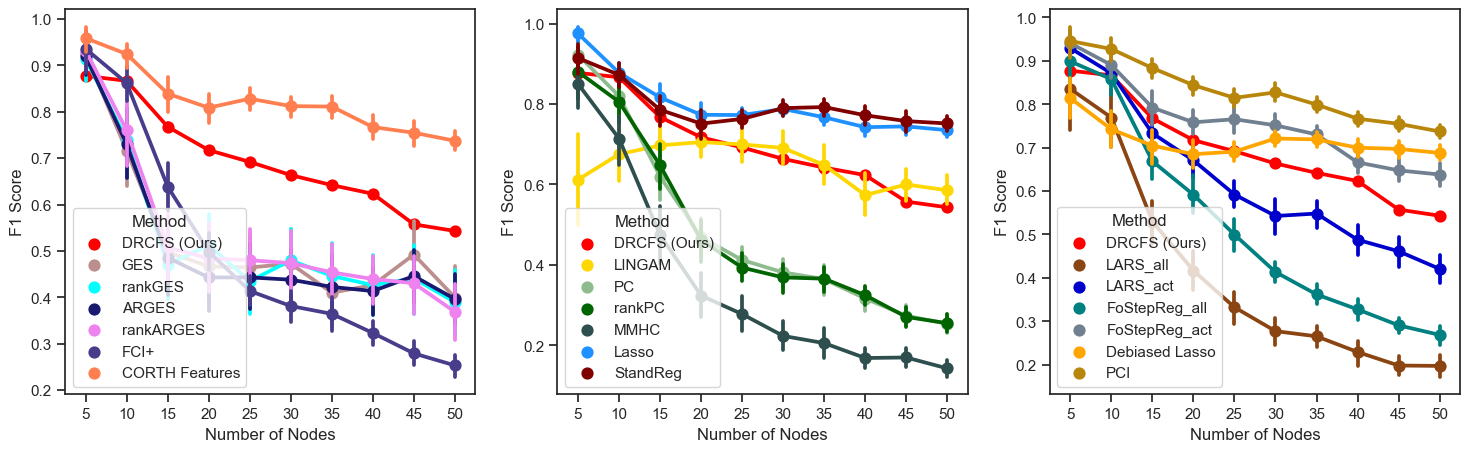}
  \caption{Performance (F1 score) of the algorithms w.r.t. number of nodes $m$ for causal structures defined as linear combinations of nonlinear functions ($f = f_2$  with probability 0.5, $f = f_3$  with probability 0.25, $f = f_4$  with probability 0.25), where $p_s = 0.5$, $p_h = 0$, and $\epsilon \sim \beta(2,5)$. Each case is averaged over 50 simulations. ForestRiesz with identity feature map $\phi(\boldsymbol{X}) = \boldsymbol{X}$ has been used in these experiments. DRCFS shows reasonably good performance, even though the identity feature map is used. Nevertheless, alternative feature maps taking into account prior domain knowledge of the dataset could be used. CORTH Features dominates others because in the regime that the noise is defined, summations of non-linear terms in DGP fro the target variable act approximately linear, hence, these results are consistent with~\citet{soleymani2022causal}.} \label{fig:exp2_f1}
\end{figure}

\begin{figure}[t] 
  \centering \includegraphics[width=\columnwidth]{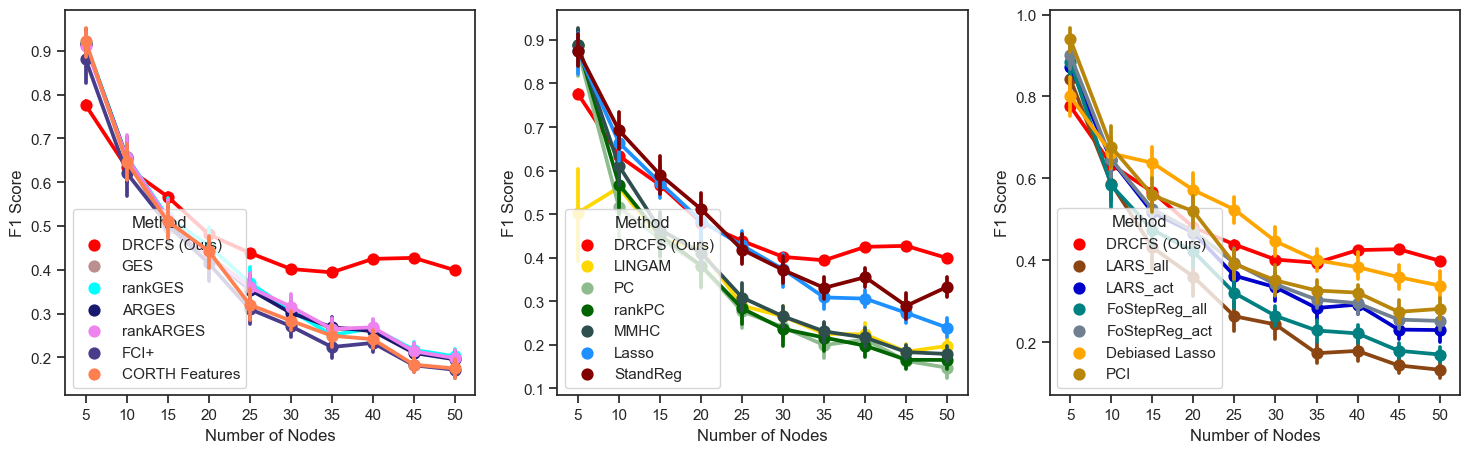}
  \caption{Performance (F1 score) of the algorithms w.r.t. number of nodes $m$ for fully Log-sum-exp causal structures ($f = f_6$ with probability 1), where $p_s = 0.5$, $p_h = 0.1$, and $\epsilon \sim \mathcal{N}(0,1)$. Each case is averaged over 50 simulations. We use ForestRiesz with identity feature map $\phi(\boldsymbol{X}) = \boldsymbol{X}$. DRCFS's performance shows stability even for large graphs while the baselines suffer in high dimensions. }
  \label{fig:exp4_f1}
\end{figure}

\begin{figure}[!h] 
  \centering \includegraphics[width=\columnwidth]{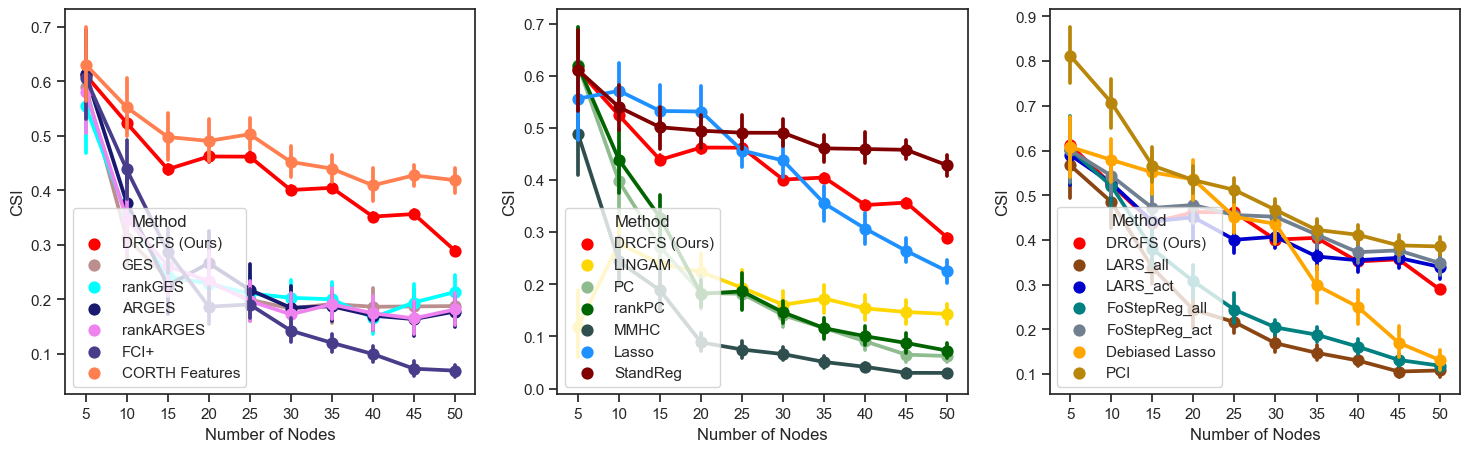}
  \caption{Performance (CSI) of the algorithms w.r.t. number of nodes $m$ for causal structures with geometric mean relationship ($f = f_5$ with probability 0.8, $f = f_1$ with probability 0.2), where $p_s = 0.5$, $p_h = 0$, and $\epsilon \sim \mathcal{N}(0,1)$. Each case is averaged over 50 simulations. We use ForestRiesz with identity feature map $\phi(\boldsymbol{X}) = \boldsymbol{X}$. DRCFS's performance shows stability even for large graphs while the baselines suffer in high dimensions.}
  \label{fig:exp3_csi}
\end{figure}

\begin{figure}[!h] 
  \centering \includegraphics[width=\columnwidth]{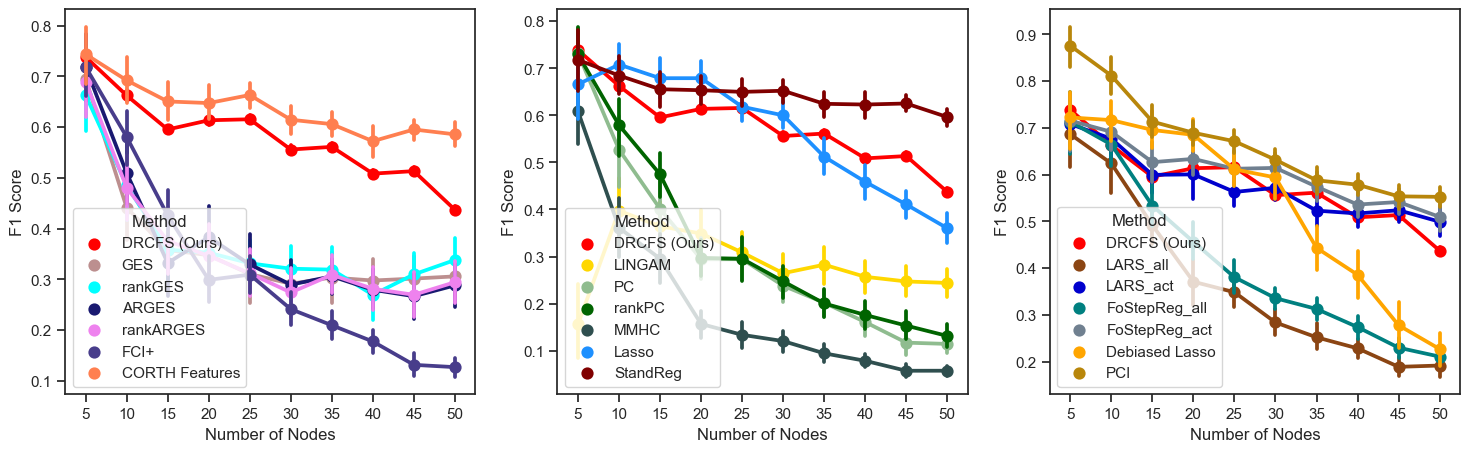}
  \caption{Performance (F1 Score) of the algorithms w.r.t. number of nodes $m$ for causal structures with geometric mean relationship ($f = f_5$ with probability 0.8, $f = f_1$ with probability 0.2), where $p_s = 0.5$, $p_h = 0$, and $\epsilon \sim \mathcal{N}(0,1)$. Each case is averaged over 50 simulations. We use ForestRiesz with identity feature map $\phi(\boldsymbol{X}) = \boldsymbol{X}$. DRCFS's performance shows stability even for large graphs while the baselines suffer in high dimensions.}
  \label{fig:exp3_f1}
\end{figure}

\begin{figure}
\begin{minipage}[c]{0.45\textwidth}
\vspace{-15pt}
      \centering \includegraphics[width=\textwidth]{0.png}
      \caption{Linear target $f=f_1$. The ground truth about parental status is given in the legend. Despite the highly complicated structure, DRCDS captures most of the direct causes only with identity feature map $\phi(\boldsymbol{X}) = \boldsymbol{X}$.}
\end{minipage}
    \hfill
\begin{minipage}[c]{0.45\textwidth}
      \centering \includegraphics[width=\textwidth]{1.png}
      \caption{Log-sum-exp target $f=f_6$. The ground truth about parental status is given in the legend. Despite the highly complicated structure and limited number of observations, DRCDS captures some of the direct causes only with identity feature map $\phi(\boldsymbol{X}) = \boldsymbol{X}$.}
\end{minipage}
    \hfill
\begin{minipage}[c]{0.45\textwidth}
\vspace{15pt}
      \centering \includegraphics[width=\textwidth]{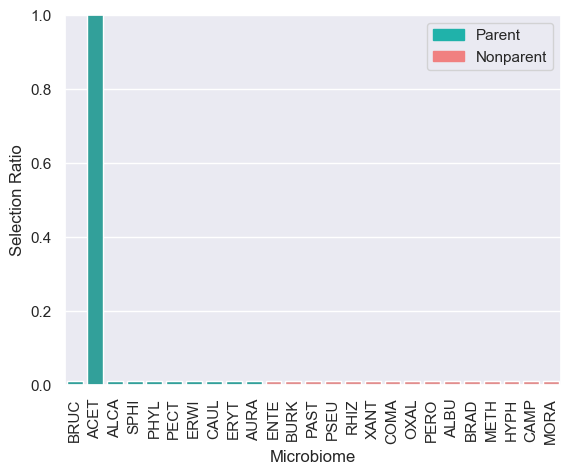}
      \caption{Sqrt-sum target $f=f_7$. The ground truth about parental status is given in the legend. Despite the highly complicated structure and limited number of observations, DRCDS captures some of the direct causes only with identity feature map $\phi(\boldsymbol{X}) = \boldsymbol{X}$.}
\end{minipage}
    \hfill
\begin{minipage}[c]{0.45\textwidth}
\vspace{15pt}
      \centering \includegraphics[width=\textwidth]{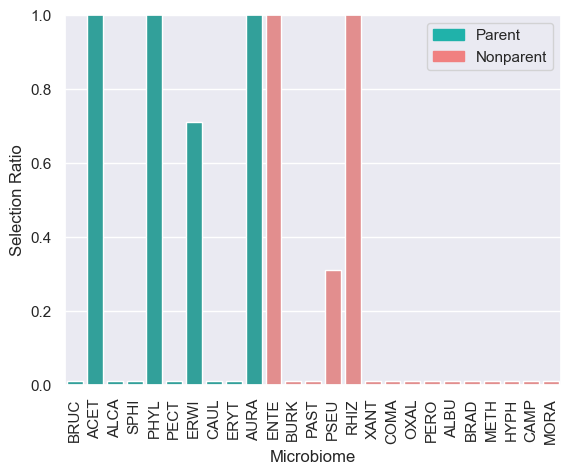}
      \caption{Sum-tanh target $f=f_4$. The ground truth about parental status is given in the legend. Despite the highly complicated structure and limited number of observations, DRCDS captures some of the direct causes only with identity feature map $\phi(\boldsymbol{X}) = \boldsymbol{X}$.}
\end{minipage}
    \caption{The ratio of simulations that each variable is selected by DRCFS to the total number of simulations (30) for different target functions $f$.} \label{fig:mic_appendix}
  \end{figure}

\begin{figure*}[!h] 
  \centering \includegraphics[width=\columnwidth]{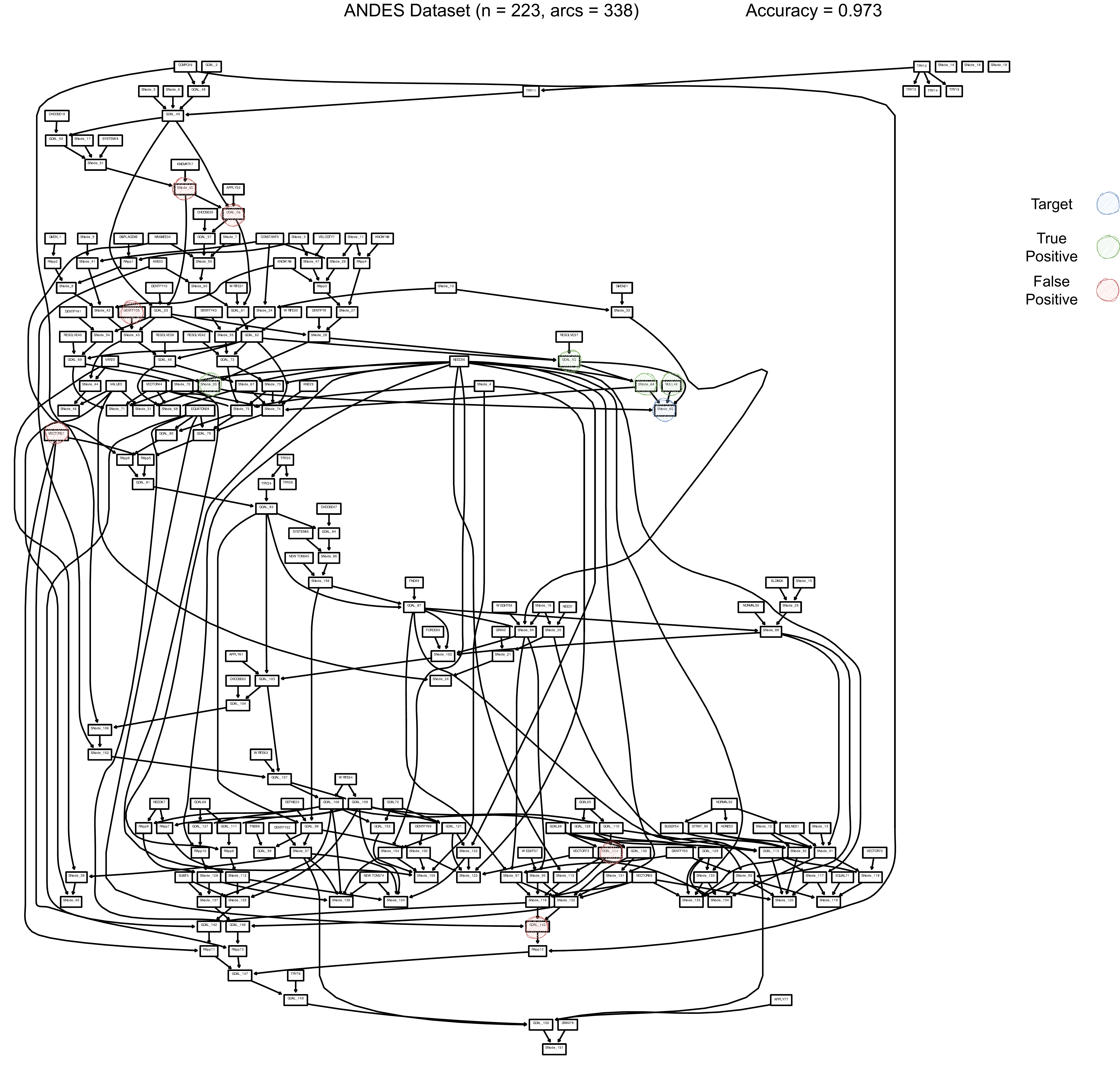}
  \caption{Causal Structure of ANDES benchmark~\citep{bnlearn} and DRCFS's inferred causes. ANDES is a very large size discrete Bayesian network. DRCFS has good performance with very few false positives and no false negatives.}
  \label{fig:andes}
\end{figure*}

\begin{figure*}[!h] 
  \centering \includegraphics[width=\columnwidth]{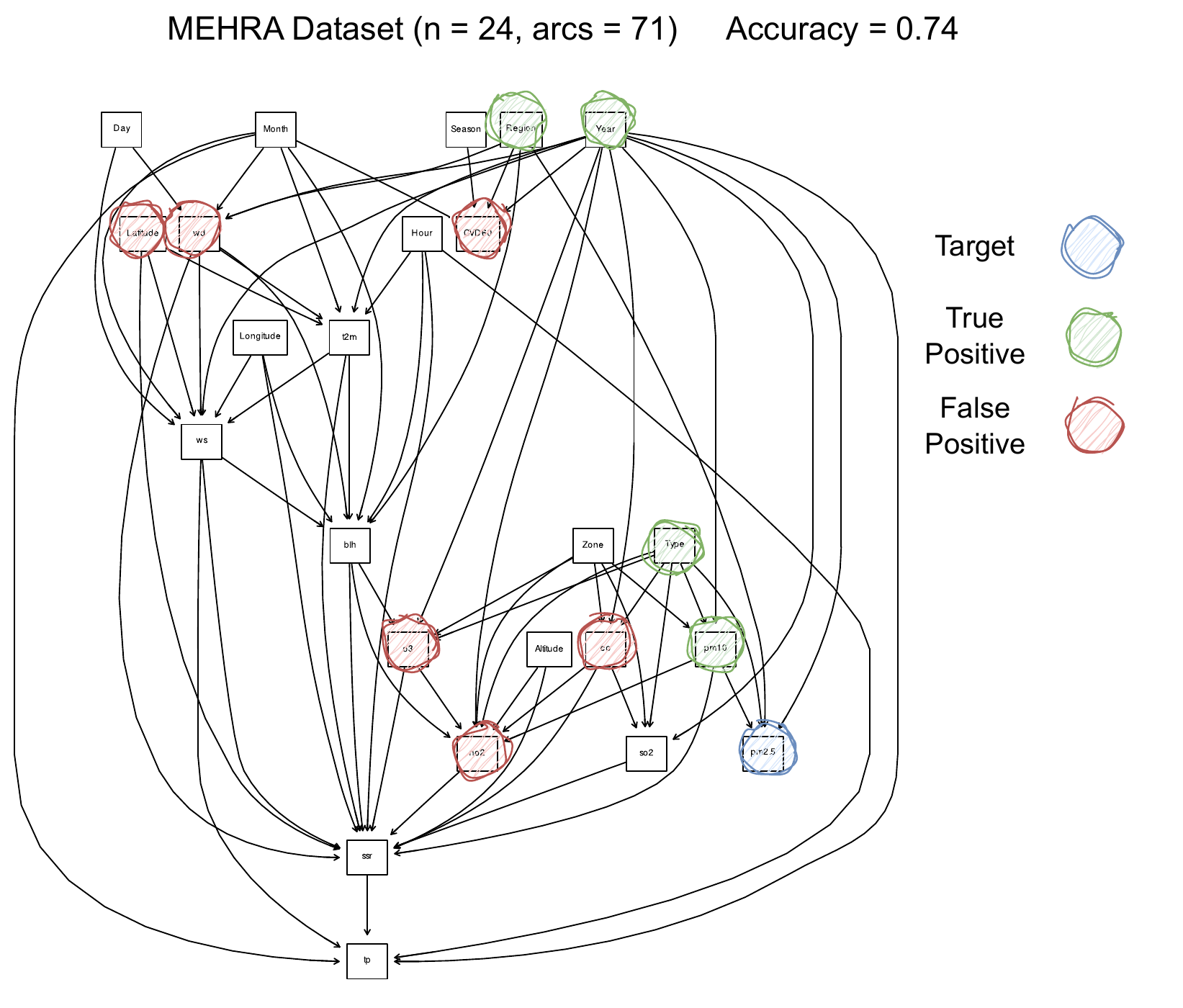}
  \caption{Causal Structure of MEHRA benchmark~\citep{bnlearn} and DRCFS's inferred causes. ANDES is a medium size conditional linear Gaussian Bayesian network. DRCFS has good performance with very few false positives and no false negatives.}
  \label{fig:mehra}
\end{figure*}

\begin{figure*}[!h] 
  \centering \includegraphics[width=\columnwidth]{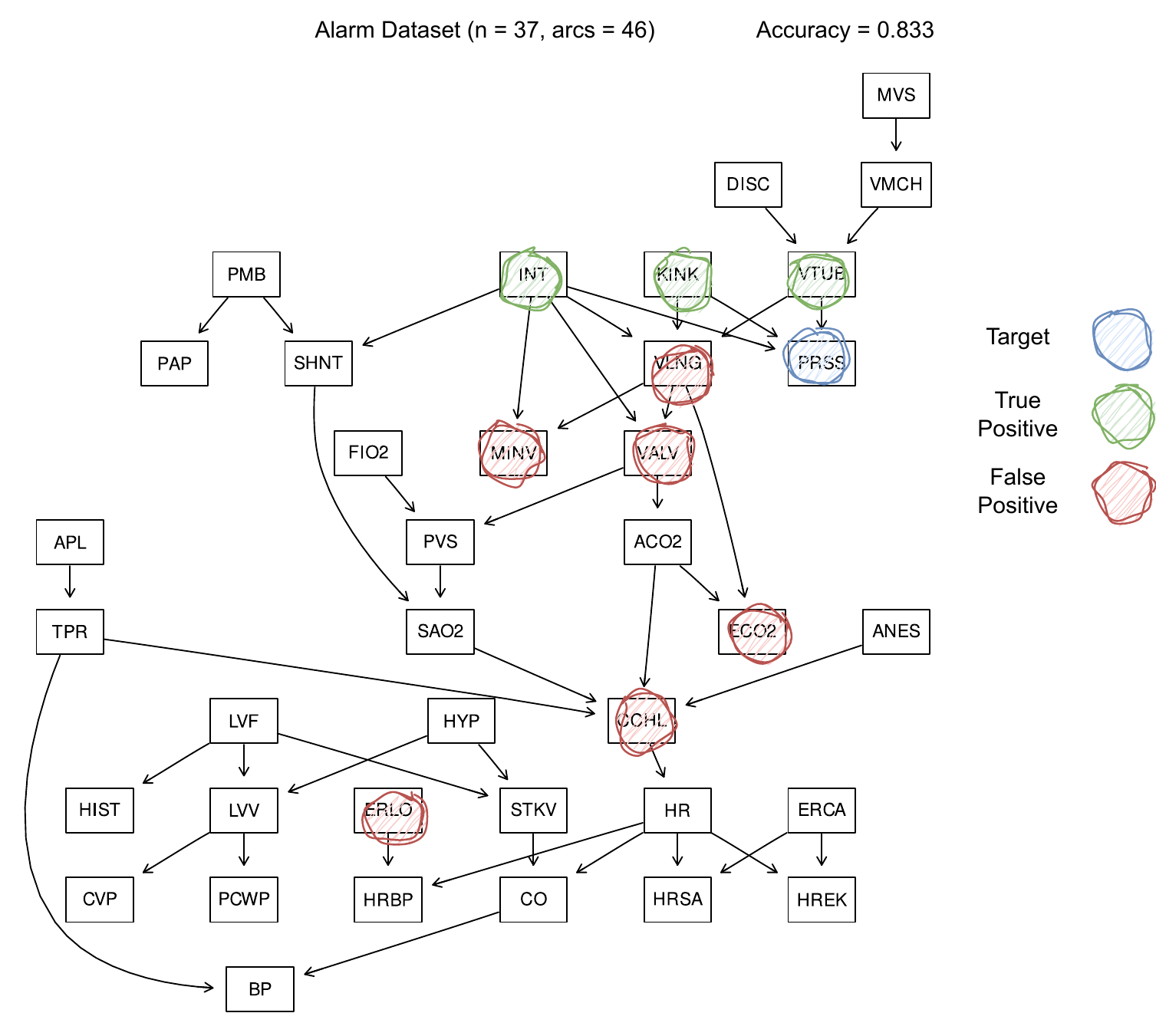}
  \caption{Causal Structure of ALARM benchmark~\citep{bnlearn} and DRCFS's inferred causes. ANDES is a medium size discrete Bayesian Network. DRCFS has good performance with very few false positives and no false negatives.}
  \label{fig:alarm}
\end{figure*}

\begin{figure*}[!h] 
  \centering \includegraphics[width=\columnwidth]{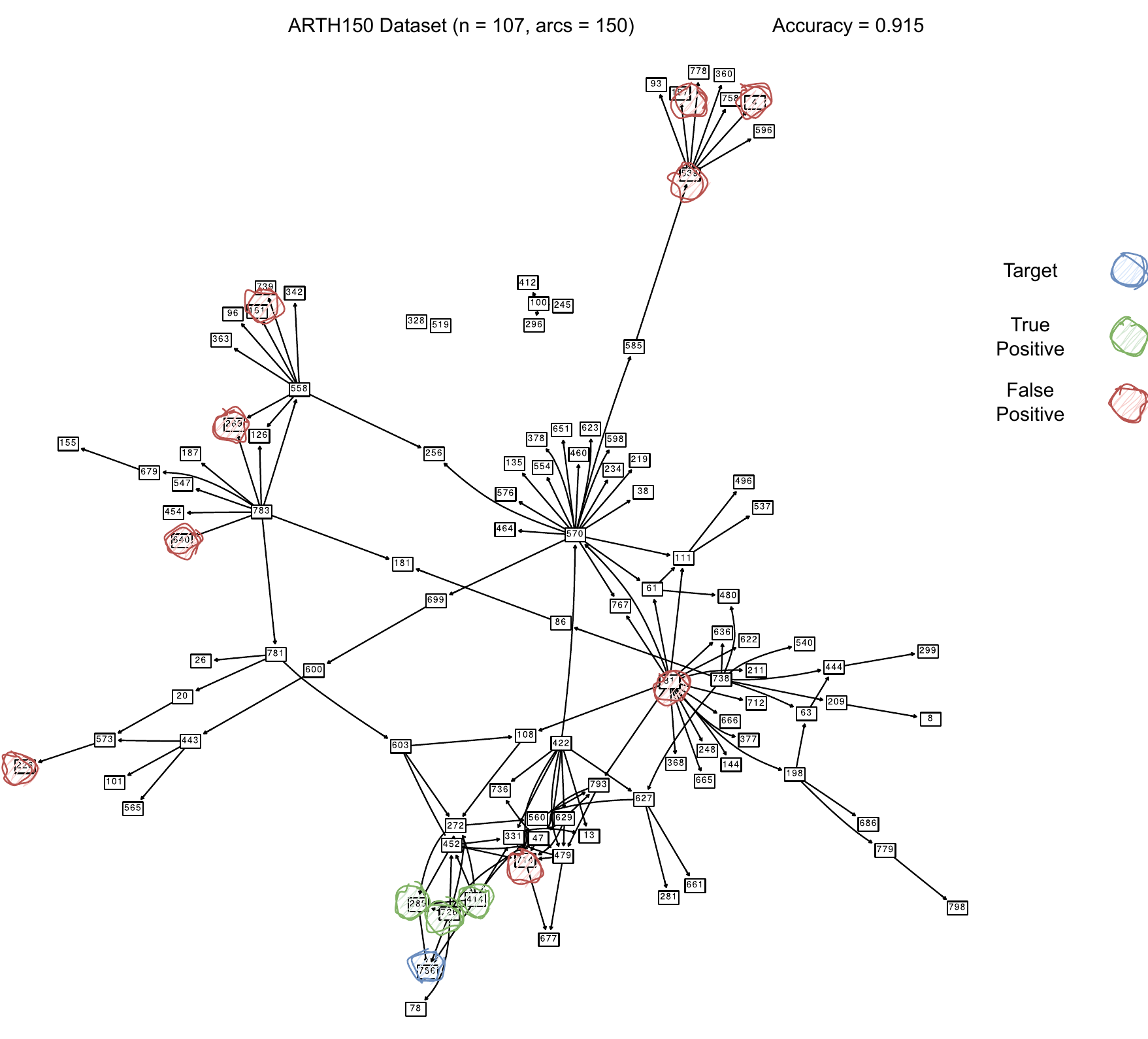}
  \caption{Causal Structure of ARTH150 benchmark~\citep{bnlearn} and DRCFS's inferred causes. ANDES is a very large size Gaussian Bayesian network. DRCFS has good performance with very few false positives and no false negatives.}
  \label{fig:arth150}
\end{figure*}
\end{document}